%
\documentclass[runningheads]{llncs}

\usepackage[T1]{fontenc}
\def\doi#1{\href{https://doi.org/\detokenize{#1}}{\url{https://doi.org/\detokenize{#1}}}}
\usepackage{graphicx}
%
%
\usepackage{listings}
\lstset{language=Pascal}

\usepackage[square,numbers]{natbib}
\bibliographystyle{abbrvnat}

\usepackage{booktabs}
\usepackage{multirow}

\usepackage{amsfonts}
\usepackage{amsmath}


\usepackage[ruled,linesnumbered]{algorithm2e}

\usepackage{xspace}

\usepackage{hyperref}
\usepackage{cleveref}

\usepackage{etoolbox}
\providetoggle{arxiv}

\usepackage{aircraftshapes}
\usepackage{here}

\renewcommand{\vec}[1]{\boldsymbol{\mathbf{#1}}}

\DeclareMathOperator*{\argmin}{arg\,min}




\newcommand{\fsat}{{f_{\mathrm{sat}}}}

\usepackage{wasysym}
\usepackage[clock]{ifsym}

\newcommand{\tool}{SpecRepair\xspace}

\newcommand{\exfailure}{\ensuremath{\times}}
\newcommand{\extimeout}{\bell}
\newcommand{\exsuccess}{\ensuremath{\checkmark}}
\newcommand{\exunknown}{?}

\usepackage{subcaption}
\usepackage{caption}
\usepackage{tabularx}
\usepackage{longtable}
\captionsetup[table]{skip=5pt}

\usepackage{tikz}
\usepackage{pgfplots}
\usepgfplotslibrary{groupplots}
\usepgfplotslibrary{fillbetween}

\pgfplotsset{width=4.5cm,compat=1.9,height=3.7cm}

\definecolor{Colors-A}{RGB}{228,26,28}  
\definecolor{Colors-C}{RGB}{55,126,184}  
\definecolor{Colors-B}{RGB}{77,175,124}  
\definecolor{Colors-D}{RGB}{122,48,133}  
\definecolor{Colors-E}{RGB}{255,127,0}  
\definecolor{Colors-F}{RGB}{207,129,191}  

\newcommand{\mat}[1]{\ensuremath{#1}\xspace}

\definecolor{Set1-A}{RGB}{228,26,28}

\settoggle{arxiv}{true}

\begin{document}
\title{\tool: Counter-Example Guided Safety Repair of Deep Neural Networks}
\titlerunning{\tool: Counter-Example Guided Repair of Neural Networks}

%
%
\author{Fabian Bauer-Marquart\inst{1} \and
David Boetius\inst{1} \and
Stefan Leue\inst{1} \and \\
Christian Schilling\inst{2}}
\authorrunning{F. Bauer-Marquart et al.}
%
\institute{University of Konstanz, Germany\\
\and
Aalborg University, Denmark}
\maketitle              

\begin{abstract}
Deep neural networks (DNNs) are increasingly applied in safety-critical domains, such as self-driving cars, unmanned aircraft, and medical diagnosis.
It is of fundamental importance to certify the safety of these DNNs, i.e. that they comply with a formal safety specification.
While safety certification tools exactly answer this question, they are of no help in debugging unsafe DNNs, requiring the developer to iteratively verify and modify the DNN until safety is eventually achieved.
Hence, a repair technique needs to be developed that can produce a safe DNN automatically.
To address this need, we present SpecRepair, a tool that efficiently eliminates counter-examples from a DNN and produces a provably safe DNN without harming its classification accuracy.
SpecRepair combines specification-based counter-example search and resumes training of the DNN, penalizing counter-examples and certifying the resulting DNN.
We evaluate SpecRepair's effectiveness on the ACAS Xu benchmark, a DNN-based controller for unmanned aircraft, and two image classification benchmarks.
The results show that SpecRepair is more successful in producing safe DNNs than comparable methods, has a shorter runtime, and produces safe DNNs while preserving their classification accuracy.
\keywords{Neural networks \and safety repair \and safety specification.}

\end{abstract}

\section{Introduction}

Autonomous systems are increasingly steered by machine-learned controllers.
The moment these controllers are integrated into self-driving cars \cite{DBLP:conf/ijcnn/OnishiMSMO19}, unmanned drones \cite{julian2016policy}, or software for medical diagnosis \cite{DBLP:journals/itpro/DjavanshirCY21}, they become safety-critical. 
Machine learning models, such as \textit{deep neural networks} (DNNs), have been shown to not be robust against small modifications to the input \cite{DBLP:journals/corr/SzegedyZSBEGF13}.
These inputs, which we call \emph{counter-examples}, can radically change the classification outcome, thus leading to safety hazards.
Consequently, various safety certification tools have been proposed to show the absence of counter-examples \cite{DBLP:journals/csr/HuangKRSSTWY20}.
However, if the controller is not safe, using these tools results in a tedious process of iteratively verifying and modifying the controller until a safe version, free of any counter-examples, is eventually obtained.
While several methods have addressed this problem by focusing only on classification robustness \cite{DBLP:journals/corr/GoodfellowSS14,DBLP:conf/iclr/MadryMSTV18}, it is essential to target the more general \emph{formal safety properties}~\cite{DBLP:journals/tse/Lamport77} instead.
Such logic properties are crucial when analyzing and verifying these safety-critical systems.

To address the issues mentioned above, we introduce \emph{SpecRepair}, a safety repair tool for DNNs  that renders manual iterative modification obsolete.
This is achieved by a counter-example search algorithm tailored to formal safety properties, a repair procedure that balances accuracy and counter-example elimination, and a final safety certification.

\paragraph{Related Work.}\label{par:related-work}

We summarize three threads of work towards counter-example search and repair of DNNs in the context of formal safety specifications:

\textbf{Certification.}
The verification community has developed techniques that provably determine a DNN's safety \cite{DBLP:journals/ftopt/LiuALSBK21}: either giving a formal guarantee that the specification is satisfied or finding a counter-example.
This problem is NP-hard, and approaches such as \cite{DBLP:conf/cav/HuangKWW17,DBLP:conf/cav/KatzHIJLLSTWZDK19} solve the problem precisely and thus are only suitable for relatively small DNNs.
ERAN \cite{DBLP:journals/pacmpl/SinghGPV19} uses an abstract interpretation to make verification more scalable. 
To find a counter-example, all the above approaches require a logic encoding of the DNN to use SMT or MILP solvers, which limits their scalability, whereas we found that our way to obtain counter-examples even scales to large DNNs.

\textbf{Adversarial attacks and adversarial search.}
The machine-learning community has designed several algorithms to find counter-examples. 
Most algorithms only consider \emph{classification robustness}, which expresses that a classifier assigns the same label to similar inputs.
\citet{DBLP:journals/corr/GoodfellowSS14} proposed the fast-gradient sign method (FGSM), one of the first such attack algorithms.
\citet{DBLP:conf/icml/MoonAS19} accelerate search via an optimization procedure for image classifiers that perturbs only parts of the input image, and \citet{DBLP:conf/sp/ChenJW20} generate counter-examples optimized for the $\ell_2$ and $\ell_\infty$ norms.
DL2 by \citet{DBLP:conf/icml/FischerBDGZV19} can express specifications beyond robustness and send queries to a basin-hopping optimizer.
Its repair capabilities are discussed in the next paragraph.
Some works monitor a DNN for adversarial attacks but do not target other safety properties~\cite{HenzingerLS20,LukinaSH21,Cheng21}.

\textbf{Formal safety repair of neural networks.}
Verification and adversarial attacks alone only analyze DNNs statically.
The ultimate goal, however, is to repair the DNNs such that they become \emph{provably safe} and at the same time \emph{maintain high classification accuracy} (the ratio of correct classifications amongst all inputs).
We explicitly distinguish the concept of formal safety repair from `repairs' that mainly target improving a model's test accuracy, as done in \cite{DBLP:conf/cav/UsmanGSNP21}.
We also need to distinguish formal safety repair from adversarial defence techniques, such as \cite{DBLP:conf/iclr/MadryMSTV18,DBLP:journals/cacm/GoodfellowPMXWO20}, as these do not lead to any guarantees.
DL2 by \citet{DBLP:conf/icml/FischerBDGZV19} integrates logic constraints into the DNN training procedure, but does not give formal guarantees that the resulting DNN ultimately satisfies these logic constraints.
\citet{DBLP:conf/lpar/GoldbergerKAK20} (minimal modification) use the verifier Marabou \cite{DBLP:conf/cav/KatzHIJLLSTWZDK19} to directly modify network weight parameters to satisfy a given specification; the technique is based on SMT solving and hence suffers from limited scalability.
Also, the modification of such parameters may harm the DNN's accuracy.
The approach nRepair by \citet{DBLP:conf/qrs/DongSWWD21} iteratively generates counter-examples using a verifier. Instead of modifying the DNN directly, violating inputs are sent to a copy of the original DNN with modified parameters. Then, the combined model is verified again until no counter-example is found. The method only handles fully-connected feed-forward DNNs and does not support convolutional neural networks (CNNs). In the evaluation, we show that our approach is more efficient.
Finally, \citet{DBLP:conf/pldi/SotoudehT21}, similarly to \cite{DBLP:conf/lpar/GoldbergerKAK20}, aim to minimally modify a given DNN according to a formal specification using an LP solver. However, the specifications that the method supports are limited: for ACAS Xu-sized DNNs, only two-dimensional input regions are supported.

In conclusion, existing repair procedures mainly consider robustness, without giving any safety guarantees. Only four methods are concerned with safety specifications: however, these either have scalability issues \cite{DBLP:conf/pldi/SotoudehT21,DBLP:conf/lpar/GoldbergerKAK20,DBLP:conf/qrs/DongSWWD21}, or do not give any formal guarantee for the resulting network \cite{DBLP:conf/icml/FischerBDGZV19}.

\paragraph{Contributions.}

To address the lack of scalable and performance-preserving neural network repair methods, we propose \emph{\tool}, an efficient and effective technique for specification-based counter-example guided repair of DNNs.

\textbf{First}, we define the \emph{satisfaction function}, an objective function that combines the function represented by the original DNN with the formal safety specification. This facilitates the search for counter-examples, i.e. inputs that lead to unsafe behavior due to violating the specification.
\textbf{Second}, we propose an approach to find these counter-examples. For that, we turn the counter-example generation problem into an optimization \cite{kochenderfer2019algorithms} problem. 
A global optimizer then carries out the specification-based counter-example search.
\textbf{Third}, we introduce an automated repair mechanism that uses the original DNN's loss function and the counter-examples from the second step to create a penalized training loss function. Additional training iterations are performed on the DNN and eliminate the counter-examples in the process while preserving high accuracy.
A verifier then checks specification compliance of the repaired network.
Crucially, the verifier is typically used only once.
\textbf{Finally}, we demonstrate the performance of \tool compared to several state-of-the-art approaches. 
The experimental results show that \tool efficiently finds counter-examples in the DNNs and successfully repairs more DNNs while also achieving better classification accuracy for the repaired DNNs.

\section{Background}\label{sec:background}

In this work, we study deep neural networks (DNNs). While our approach is independent of the particular application, 
to simplify the presentation, we restrict our attention to classification tasks.
A \emph{deep neural network} $N: \mathbb{R}^n \to \mathbb{R}^m$ assigns a given input $\vec{x} \in \mathbb{R}^n$ to confidence values $\vec{y} \in \mathbb{R}^m$ for $m$ class labels.
A DNN comprises $k$ layers that are sequentially composed such that $N = f_k \circ \dots \circ f_1$.
Each layer $i$ is assigned an activation function $\sigma_i$ and learnable parameters $\vec{\theta}$, consisting of a weight matrix $\mat{W}$ and a bias vector $\vec{b}$, such that the output of the $i$th layer is a function $f_i: \mathbb{R}^{k_{i-1}} \to \mathbb{R}^{k_i}$ with $f_i(\vec{z}) = \sigma_i(W_i \, \vec{z} + \vec{b}_i)$.

\medskip

We consider \emph{formal specifications} $\Phi = \{\varphi_1, \dots, \varphi_s\}$  composed of $s$ input-output properties.
Such a property $\varphi = (X_\varphi, Y_\varphi)$ specifies that for all points in an input set $X_\varphi$, the network needs to predict outputs that lie in an output set $Y_\varphi$.
For simplicity, we consider interval input sets $X_\varphi$ and assume that $Y_\varphi$ is a Boolean combination of constraints given in conjunctive normal form (CNF).
\begin{align}
    X_\varphi &= \left\{ \left. \prod_{i=1}^n [l_i, u_i] \right|   l_i, u_i \in \mathbb{R}, l_i \leq u_i \right\}, \label{eq:input_constraints} \\
    Y_\varphi &= \left\{ \vec{y} \in \mathbb{R}^m \left|\,\, \vec{y} \models \bigwedge_{j_1=1}^{a_\varphi}\bigvee_{j_2=1}^{b_\varphi} B_{j_1,j_2} \right. \right\}, \label{eq:output_constraints}
\end{align}
where $a_\varphi,b_\varphi \in \mathbb{N}$ are the total number of logical conjunctions and disjunctions, respectively.
The atomic constraints $B_{j_1,j_2}$ are of the form
%
\begin{align}\label{eq:atoms}
    B_{j_1,j_2} \equiv g_{j_1,j_2}(\vec{y}) \geq 0
\end{align}
where the $g_{j_1,j_2}: \mathbb{R}^m \to \mathbb{R}$ are computable functions of the output values $\vec{y}$.
Common examples of output constraints include linear constraints (such as comparing two outputs $y_1 \leq y_2$; see \Cref{tab:example-specification} for constraints used in the running example).

An $\ell_\infty$ \emph{robustness property} $\varphi_\epsilon$ is a special case of the above class of specifications.
Such a property specifies stable classification for all inputs from a hypercubic neighborhood around a given input $\vec{x}$ with radius $\epsilon$ and is defined for the desired class $c$ (i.e., the value of the corresponding output neuron is $y_c$):
\begin{align}
    X_{\varphi_\epsilon} &= \left\{ \prod_{i=1}^n [x_i - \epsilon, x_i + \epsilon] \right\}, &
    Y_{\varphi_\epsilon} &= \left\{  \vec{y} \in \mathbb{R}^m \left|\, y_c = \max_{j} y_j  \right. \right\}.
\end{align}

A DNN $N$ satisfies a property $\varphi$, resp.\ a specification $\Phi$, if the following holds:
\begin{equation}\label{eq:sat_property}
\begin{split}
    N \models \varphi &\iff \forall \vec{x} \in X_\varphi: N(\vec{x}) \in Y_\varphi \\
    N \models \Phi &\iff \forall \varphi \in \Phi: N \models \varphi.
\end{split}
\end{equation}


\subsection{Running example}

\begin{figure}[tb]
    \centering
    \begin{subfigure}[t]{0.45\textwidth}
        \hspace{-0.45cm}
        \includegraphics[width=0.9\textwidth]{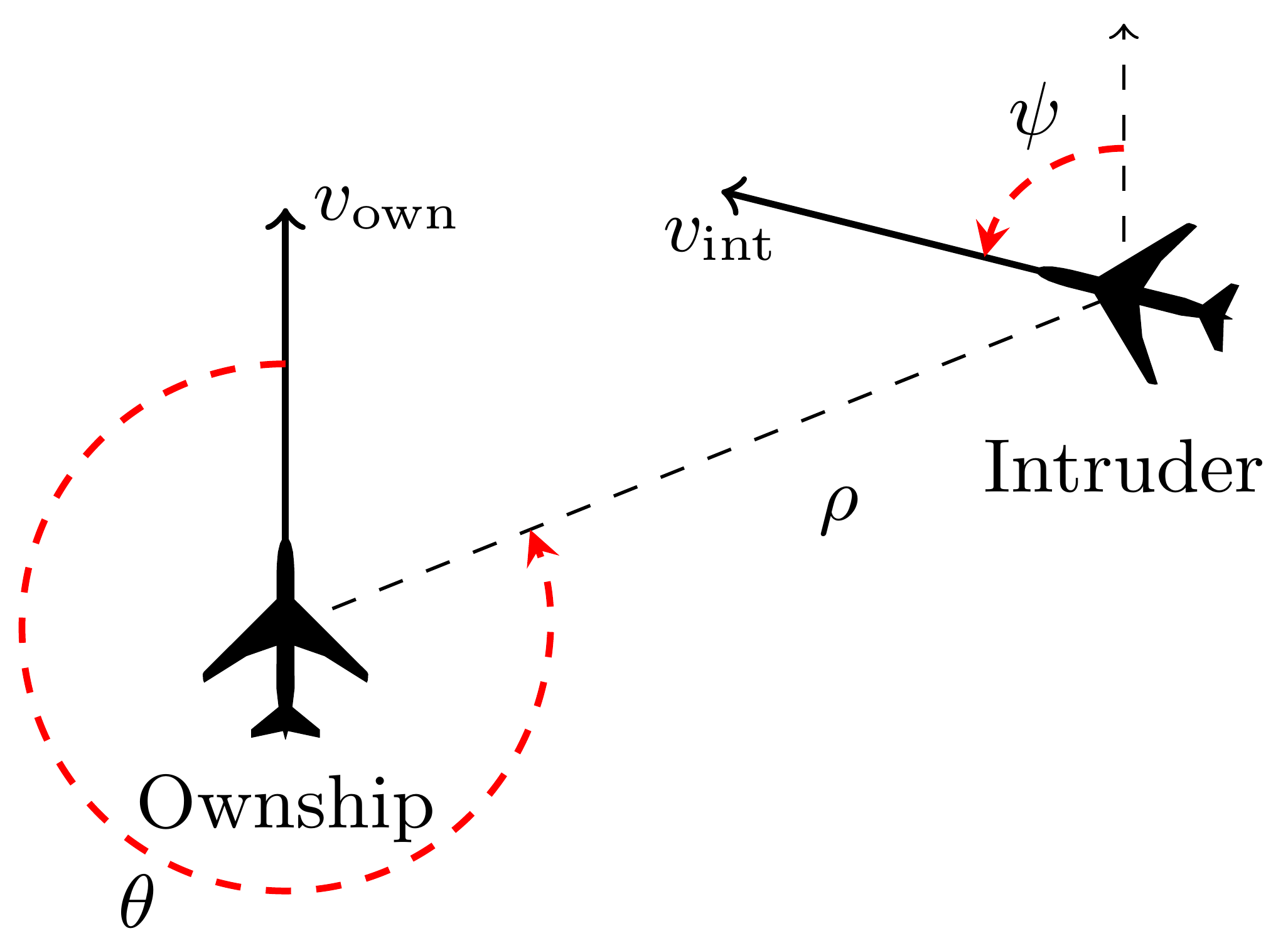}
        \label{subfig:acasxu-inputs}
    \end{subfigure}
    \begin{subfigure}[t]{0.49\textwidth}
        \includegraphics[width=1.15\textwidth]{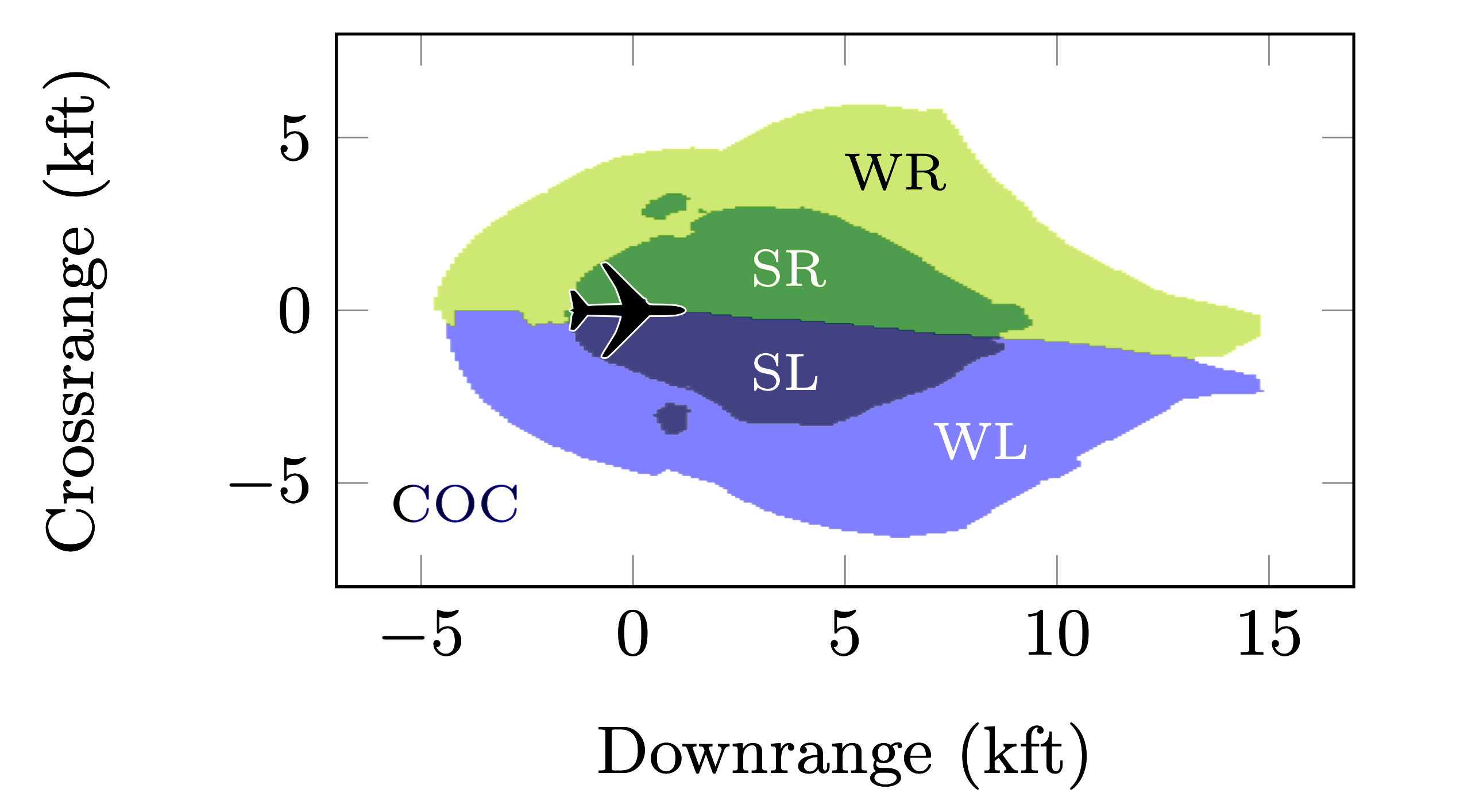}
        \label{subfig:acasxu-policies}
    \end{subfigure}
    \caption{\textbf{Running example description.} \textit{Left}: Input variables for the ACAS Xu DNNs \cite{julian2016policy}: Distance from ownship to intruder $\rho$, angle to intruder relative to ownship heading direction $\theta$, heading angle of intruder relative to ownship heading direction $\psi$, speed of ownship $v_\text{own}$, and speed of intruder $v_\text{int}$.
    \textit{Right}: Output advisories: Clear-of-conflict (COC), weak left (WL), weak right (WR), strong left (SL), and strong right (SR). Both aircraft are in the same horizontal plane. Crossrange is perpendicular to the flight direction, while downrange is horizontal to the flight direction.}
    \label{fig:acasxu-overview-vis}
\end{figure}

\emph{ACAS Xu} \cite{julian2016policy} is a system for collision avoidance of two aircraft, consisting of 45 fully connected DNNs.
Five inputs describe the relative position and speed of the two aircraft, while the outputs are five advisories, shown in \Cref{fig:acasxu-overview-vis}.
Two additional parameters, the time until loss of vertical separation $\tau$ and the previous advisory $a_\text{prev}$, are used to index which of the 45 DNNs applies to the specific scenario.
The advisory that is suggested corresponds to the DNN output with the minimum value.

\begin{table}[tb]
    \centering
    \begin{tabularx}{\textwidth}{@{}lXl@{}} \toprule
        Spec \hspace{1cm} & $X_{\varphi}$ & $Y_{\varphi}$ \\ \midrule
        $\varphi_1$
        & $[55947.691, \infty] \times \mathbb{R}^2 \times [1145, \infty] \times [-\infty, 60]$ 
        & $\left\{ \vec{y} \left|\, y_1 \leq 1500 \right. \right\}$ \\
        $\varphi_2$ 
        & $[55947.691, \infty] \times \mathbb{R}^2 \times [1145, \infty] \times [-\infty, 60]$ 
        & $\left\{ \vec{y} \left|\, y_1 \leq \max_{i \neq 1} y_i \right. \right\}$ \\
        \bottomrule
    \end{tabularx}
    \caption{\textbf{Running example properties.} ACAS Xu safety properties; If the intruder is distant and is significantly slower than the ownship, $\varphi_1$ ``Clear-of-conflict (COC, $y_1$) is always below 1500'' and $\varphi_2$ ``Clear-of-conflict is never the maximum output'', taken from \cite{DBLP:conf/cav/KatzBDJK17}.}
    \label{tab:example-specification}
\end{table}

To specify the safe behavior of the system, 10 safety properties have been formulated  \cite{DBLP:conf/cav/KatzBDJK17} (see \iftoggle{arxiv}{\Cref{sec:acas-xu-properties}}{Sect. A of the supplementary material \cite{supplementary}}).
Two example properties are given in \Cref{tab:example-specification}.

\definecolor{ra_0}{rgb}{0.258, 0.258, 0.515}
\definecolor{ra_1}{rgb}{0.5, 0.5, 1.0}
\definecolor{ra_2}{rgb}{0.894, 0.102, 0.102}
\definecolor{ra_3}{rgb}{0.806, 0.913, 0.456}
\definecolor{ra_4}{rgb}{0.306, 0.612, 0.306}

\begin{figure}[tb]
    \centering
    \begin{subfigure}{\textwidth}
        \centering
        \begin{tikzpicture}[]
        \begin{groupplot}[
            height={4.4cm}, width={6cm}, 
            group style={horizontal sep=1.5cm, 
            group size=2 by 1}
        ]
        \nextgroupplot [
            ylabel = {Crossrange (kft)}, 
            title = {Before Repair}, 
            xlabel = {Downrange (kft)}, 
            enlargelimits = false, axis on top,
        ]
        \addplot [
            point meta min=-3, point meta max=3
        ] graphics [
            xmin=-17.0, xmax=87.0, ymin=-52.0, ymax=52.0
        ] {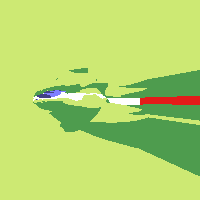};
        \node[aircraft top,
            fill=black, draw=white, 
            minimum width=2.0cm, rotate=0.0, scale = 0.35
        ] at (axis cs:0.0, 0.0) {};;
        \node[aircraft top,
            fill=red, draw=black, 
            minimum width=2.0cm, rotate=-70, scale = 0.35
        ] at (axis cs:77.64, 42.64) {};;
        
        \nextgroupplot [
             title = {After Repair}, 
             xlabel = {Downrange (kft)}, 
             enlargelimits = false, axis on top
        ]
        \addplot [
            point meta min=-3, point meta max=3
        ] graphics [
            xmin=-17.0, xmax=87.0, ymin=-52.0, ymax=52.0
        ] {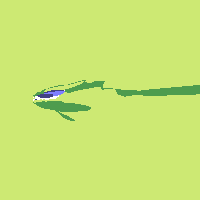};
        \node[aircraft top,
            fill=black, draw=white, 
            minimum width=2.0cm, rotate=0.0, scale = 0.35
        ] at (axis cs:0.0, 0.0) {};;
        \node[aircraft top,
            fill=red, draw=black, 
            minimum width=2.0cm, rotate=-70, scale = 0.35
        ] at (axis cs:77.64, 42.64) {};;
        \end{groupplot}
        \end{tikzpicture}
    \end{subfigure}
    \\[.25cm]
    \begin{subfigure}{\textwidth}
        \scriptsize
        \centering
        \begin{tikzpicture}
        \begin{groupplot}[height={2cm}, width={12cm}, group style={horizontal sep=2cm, group size=1 by 1}]
        \nextgroupplot [hide axis = {true}, xshift=-1.4cm, enlargelimits = false, axis on top]
        \addplot [point meta min=-2.0, point meta max=2.0] 
        graphics [xmin=-2, xmax=2, ymin=-2, ymax=2] {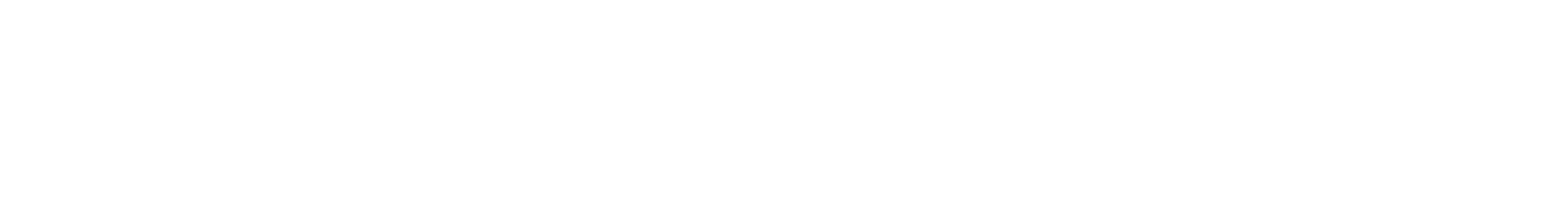};
        \addplot+[scatter, 
             point meta min=0.0, point meta max=30.0,
             scatter src=explicit symbolic, 
             only marks = {true}, 
             scatter/classes = {{
                 ra_0={style={black, fill=ra_0, mark size=3, line width=0.25pt}},
                 ra_1={style={black, fill=ra_1, mark size=3, line width=0.25pt}},
                 ra_2={style={black, fill=ra_2, mark size=3, line width=0.25pt}},
                 ok={style={black, fill=white, mark size=3, line width=0.25pt}},
                 ra_3={style={black, fill=ra_3, mark size=3, line width=0.25pt}},
                 ra_4={style={black, fill=ra_4, mark size=3, line width=0.25pt}},
                 ra_5={style={white, mark size=3}}
              }}
        ] coordinates {
        ( 6.0, 0.0) [ra_0]
        (11.0, 0.0) [ra_1]
        (16.0, 0.0) [ra_2]
        (26.5, 0.0) [ok]
        (36.0, 0.0) [ra_3]
        (41.0, 0.0) [ra_4]
        (46.0, 0.0) [ra_5]
        };
        \node at (axis cs: 6.5, 0.0) [black,anchor=west] {SL};
        \node at (axis cs:11.5, 0.0) [black,anchor=west] {WL };
        \node at (axis cs:16.5, 0.0) [black,anchor=west] {COC (unsafe)};
        \node at (axis cs:27, 0.0) [black,anchor=west] {COC (safe)};
        \node at (axis cs:36.5, 0.0) [black,anchor=west] {WR };
        \node at (axis cs:41.5, 0.0) [black,anchor=west] {SR };
        \end{groupplot}
        \end{tikzpicture}
    \end{subfigure}
    \caption{\textbf{\emph{Least} advised actions of ACAS Xu network \(N_{2,1}\) before (left) and after repair (right).}
    The advised actions are described in \Cref{fig:acasxu-overview-vis}.
    Both aircraft are in the same horizontal plane. Crossrange is perpendicular to the flight direction, while downrange is horizontal to the flight direction.
    This figure visualizes property $\varphi_2$ before and after repair.
    Here, \({\tau = 0}\), \({a_\text{prev} = \text{weak left}}\),
    \({\psi = -70^{\circ}}\), \({v_\mathrm{own} = 1185.0}\), and \({v_\mathrm{int} = 7.5}\).
    The \textbf{red area} 
    shows unsafe behavior according to $\varphi_2$ and thus constitutes counter-examples.}
    \label{fig:repair-before-after-vis}
\end{figure}

\Cref{fig:repair-before-after-vis} illustrates the goal of our paper using the ACAS Xu example: eliminate counter-examples from a given DNN by performing an automated specification-based repair.
In the figure we see that the original network (left) gives an unsafe advisory in the red region, while the repaired network (right) only gives safe advisories.

\section{\tool Overview}\label{sec:overview}

\begin{figure}[tb]
    \centering
    \includegraphics[width=\textwidth,keepaspectratio]{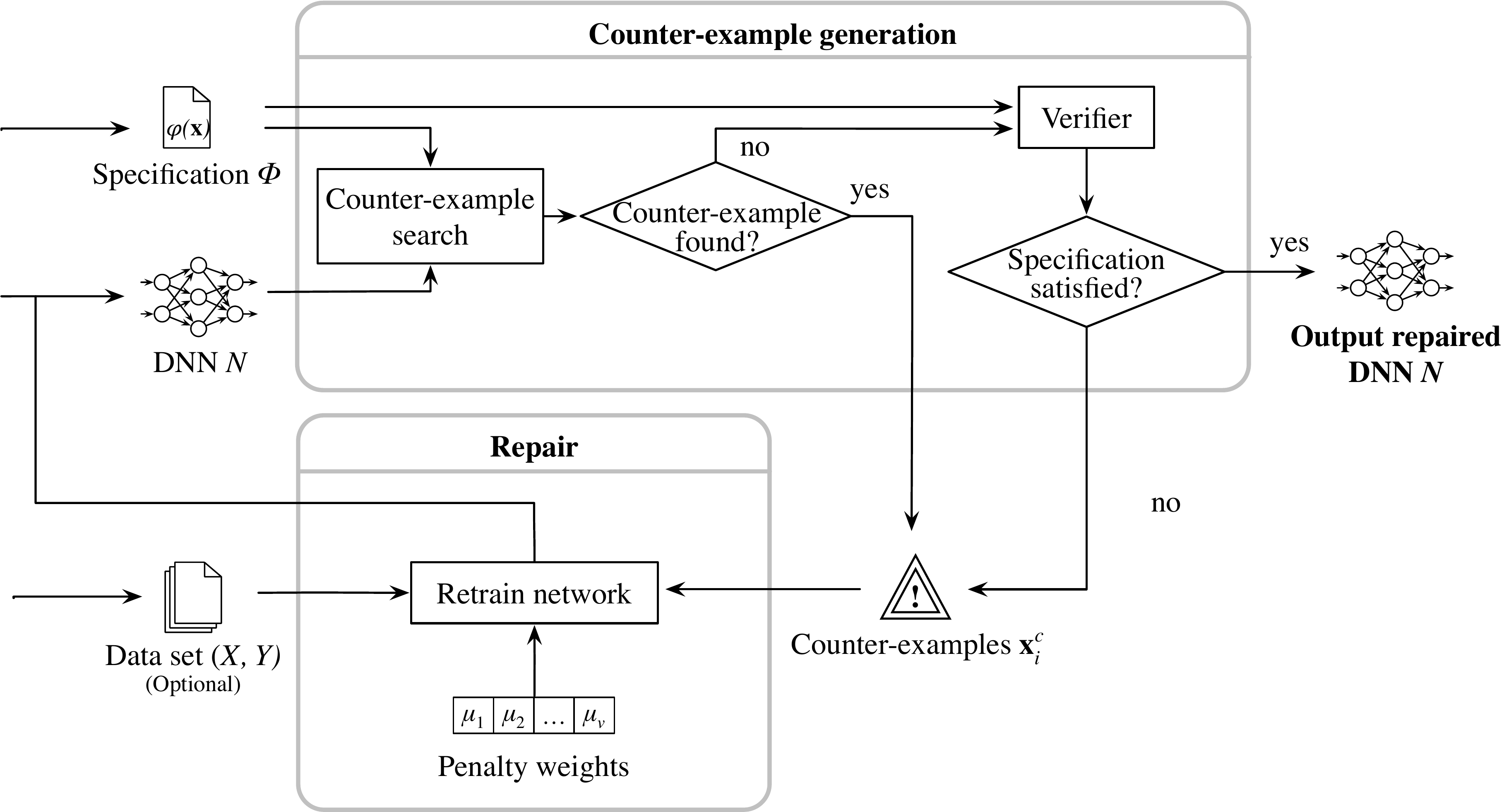}
    \caption{\tool architecture.}
    \label{fig:architecture}
\end{figure}

In this section, we give a high-level overview of our approach called \emph{\tool}.
A detailed explanation follows in the later sections.
The general structure of \tool is depicted in \Cref{fig:architecture}.
\tool iterates back and forth between the two main components \textit{counter-example generation} and \textit{repair} until it terminates after a fixed number of repair steps.

The counter-example generation component takes a formal specification $\Phi = \{\varphi_1, \dots, \varphi_s\}$ and a DNN $N$ and produces $s$ counter-examples $\vec{x}_1^c, \dots, \vec{x}_s^c$.
Subsequently, the repair component retrains the DNN, for which it uses the original data set $(X, Y)$ that $N$ was trained with, the counter-examples obtained in the last step, and a penalty weight $\mu_i$, which steers counter-example removal, starting with the original parameters $\vec{\theta}$. 
This strategy balances counter-example removal from the DNN and classification accuracy.
After re-training has taken place, the counter-example generation component is executed again. 
If no counter-example is detected, as a final step, we attempt to verify the DNN using a formal verification method.
Since the previous counter-example search is fast but incomplete (i.e., may miss counter-examples), the verifier may still find a counter-example, in which case \tool goes back to the repair component.
Otherwise, the repaired DNN $N$ is verified and returned by \tool.

In \Cref{sec:counter-example-generation} we describe how counter-example generation for a DNN $N$ with respect to a formal specification $\Phi$ is performed.
In \Cref{sec:repair-framework} we explain the counter-example guided repair approach.

\section{Finding Violations of Safety Specifications}\label{sec:counter-example-generation}

In this section, we show how the existence of a counter-example can be cast as an optimization problem.
This allows us to use an optimization procedure to find counter-examples.

\subsection{An Optimization View on Safety Specifications}

Here we show how to map a specification to an objective function, which we call the \emph{satisfaction function} $\fsat$.

\begin{definition}
The satisfaction function for an atomic constraint $B_{j_1,j_2}$ of the form $g_{j_1,j_2}(\vec{y}) \geq 0$ from~\eqref{eq:atoms} is defined as
\begin{equation}\label{eq:fsat_atomic}
    \fsat_{B_{j_1,j_2}}(\vec{y}) = g_{j_1,j_2}(\vec{y}).
\end{equation}

The satisfaction function for a given specification, i.e., set of input-output properties $\varphi = (X_\varphi, Y_\varphi)$ of the form~\eqref{eq:input_constraints} and~\eqref{eq:output_constraints}, is defined as
\begin{equation}\label{eq:fsat}
    \fsat(\vec{x}) := \underset{\varphi \in \Phi}{\min}~\underset{j_1 \in \{1 \dots a_\varphi\}}{\min}~ \underset{j_2 \in \{1 \dots b_\varphi\}}{\max}~ \fsat_{B_{j_1,j_2}}(N(\vec{x})).
\end{equation}
\end{definition}

In the following we focus on a single property $\varphi$.
Given a property $\varphi$, the satisfaction function is negative if and only if $\varphi$ is violated, which is summarized in the following theorem.

\begin{theorem}\label{th:specification-violation}
    Given a satisfaction function $\fsat$ obtained from a network $N$ and
    an input-output property $\varphi = (X_\varphi, Y_\varphi)$, we have
    \begin{equation*}
        N \not\models \varphi \iff \exists \vec{x} \in X_\varphi: \fsat(\vec{x}) < 0.
    \end{equation*}
\end{theorem}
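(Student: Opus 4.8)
The plan is to prove the biconditional by unfolding the definitions on both sides until they literally coincide, the crucial elementary observation being that over a finite nonempty index set a $\min$ is negative iff some argument is negative, and a $\max$ is negative iff every argument is negative. Since the statement concerns a single property $\varphi$, I first specialize~\eqref{eq:fsat} so that the outer minimization $\min_{\varphi \in \Phi}$ collapses, leaving $\fsat(\vec{x}) = \min_{j_1 \in \{1,\dots,a_\varphi\}} \max_{j_2 \in \{1,\dots,b_\varphi\}} g_{j_1,j_2}(N(\vec{x}))$, using $\fsat_{B_{j_1,j_2}}(\vec{y}) = g_{j_1,j_2}(\vec{y})$ from~\eqref{eq:fsat_atomic}.

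First I would rewrite the left-hand side. By~\eqref{eq:sat_property}, $N \not\models \varphi$ means $\exists \vec{x} \in X_\varphi : N(\vec{x}) \notin Y_\varphi$. By the definition of $Y_\varphi$ in~\eqref{eq:output_constraints} and the semantics of conjunctive normal form, $N(\vec{x}) \notin Y_\varphi$ holds exactly when the formula $\bigwedge_{j_1=1}^{a_\varphi} \bigvee_{j_2=1}^{b_\varphi} B_{j_1,j_2}$ is not satisfied by $N(\vec{x})$, i.e. some clause fails: $\exists j_1\, \forall j_2 : N(\vec{x}) \not\models B_{j_1,j_2}$. By the form of the atoms in~\eqref{eq:atoms}, $N(\vec{x}) \not\models B_{j_1,j_2}$ is equivalent to $g_{j_1,j_2}(N(\vec{x})) < 0$, the negation of $g_{j_1,j_2}(N(\vec{x})) \geq 0$. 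Hence $N \not\models \varphi \iff \exists \vec{x} \in X_\varphi\, \exists j_1\, \forall j_2 : g_{j_1,j_2}(N(\vec{x})) < 0$.

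Next I would rewrite the right-hand side via the same chain, but read through the $\min/\max$. Since $\{1,\dots,b_\varphi\}$ is finite and nonempty, $\max_{j_2} g_{j_1,j_2}(N(\vec{x})) < 0$ iff $g_{j_1,j_2}(N(\vec{x})) < 0$ for every $j_2$; since $\{1,\dots,a_\varphi\}$ is finite and nonempty, the outer $\min_{j_1}$ of this quantity is $< 0$ iff it is $< 0$ for some $j_1$. Therefore $\fsat(\vec{x}) < 0 \iff \exists j_1\, \forall j_2 : g_{j_1,j_2}(N(\vec{x})) < 0$, which, quantified over $\vec{x} \in X_\varphi$, is exactly the rewritten left-hand side, completing the proof.

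The only point requiring care — rather than a genuine obstacle — is keeping the strict and non-strict inequalities aligned: the atomic semantics uses $g_{j_1,j_2}(\vec{y}) \geq 0$, so its negation is the strict $g_{j_1,j_2}(\vec{y}) < 0$, and this must be the same strict $<0$ appearing in the theorem statement and in the $\min/\max$ reasoning; because all index sets are finite, the extrema are attained and no limiting ($\epsilon$-type) arguments are needed. I would close with the remark that, by~\eqref{eq:sat_property}, the equivalence lifts from a single $\varphi$ to a full specification $\Phi$ by taking the conjunction over $\varphi \in \Phi$, which is precisely why the outer $\min_{\varphi \in \Phi}$ is present in~\eqref{eq:fsat}.
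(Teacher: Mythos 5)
Your proposal is correct and follows essentially the same route as the paper's proof: both rest on the observation that an atomic satisfaction value is negative exactly when the atom is violated, combined with the CNF semantics and the correspondence between the finite $\min/\max$ in~\eqref{eq:fsat} and the $\exists/\forall$ quantifiers over clauses and disjuncts. The only cosmetic difference is that you organize it as a single chain of equivalences rewriting both sides, whereas the paper argues the two implications separately; the mathematical content is the same.
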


\begin{proof}
    Fix a network $N$ and a property $\varphi = (X_\varphi, Y_\varphi)$.
    Clearly, we have
    \begin{equation}
        \text{$\fsat_{B_{j_1,j_2}}(\vec{y})$ is negative if and only if $B_{j_1,j_2}$ is violated for $\vec{y}$}. \tag{$*$}
    \end{equation}

    First suppose that $N \not\models \varphi$.
    According to~\eqref{eq:sat_property}, there exists an input $\vec{x} \in X_\varphi$ such that $N(\vec{x}) \notin Y_\varphi$.
    Since the output constraints $Y_\varphi$ in~\eqref{eq:output_constraints} are given in conjunctive normal form, one of the disjunctions and hence all corresponding disjuncts must be violated.
    By ($*$) we have that $\fsat_B$ is negative for all these disjuncts.
    Thus the $\max$ in~\eqref{eq:fsat} and hence the image of $\fsat$ itself is negative too.
    
    Now suppose that $N \models \varphi$.
    Then for each input $\vec{x} \in X_\varphi$ we have that $N(\vec{x}) \in Y_\varphi$.
    By a similar argument as above, in each disjunction there is at least one disjunct that is satisfied.
    Using ($*$), we know that $\fsat_B$ is non-negative for this disjunct.
    Finally, from~\eqref{eq:fsat} we get that $\fsat(\vec{x})$ is non-negative.
    \qed
\end{proof}

\subsection{Using Optimization To Find Counter-examples}

Using \Cref{th:specification-violation}, for detecting \emph{counter-examples} we can now equivalently minimize the function $\fsat$ in search of values below zero.
The examples in \Cref{fig:sat-function} show this for the topology of the DNN outputs, compared to the satisfaction function for the properties $\varphi_1$ and $\varphi_2$ from the running example in \Cref{tab:example-specification}.

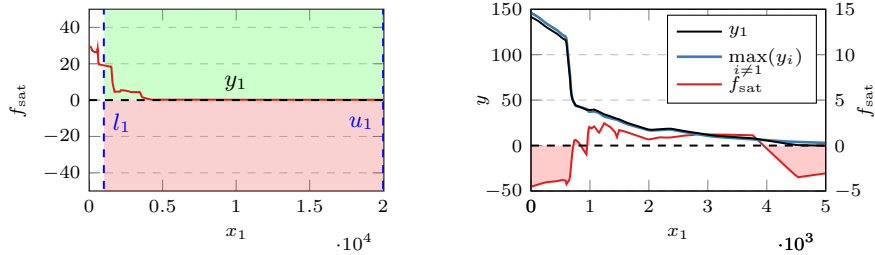
\begin{figure}[tb]
    \centering

    \begin{subfigure}[b]{0.46\textwidth}
        \begin{tikzpicture}
            \begin{axis}[
                width=5.5cm,
                height=4cm,
                xlabel={$x_1$},
                ylabel={$\fsat$},
                xmin=0, xmax=20050,
                ymin=-50, ymax=50,
                xtick={0, 5000, 10000, 15000, 20000},
                ytick={-40, -20, 0, 20, 40},
                legend style={font=\scriptsize,anchor=north west},
                ymajorgrids=true,
                grid style=dashed,
                label style={font=\scriptsize},
                tick label style={font=\scriptsize}  
            ]
            \fill [Colors-A,opacity=0.2] (axis cs:1000,-50) rectangle (axis cs:60700,0);
    
            \fill [green,opacity=0.2] (axis cs:1000,0) rectangle (axis cs:60700,50);
            
            \addplot[Colors-A,line width=0.75pt] table
            [x=x,
            y expr=(\thisrowno{1}-1500), 
            col sep=comma] {data/sample_file.csv};
            
            \draw [dashed,blue,line width=0.75pt] (axis cs:1000,-75) -- node[right]{$l_1$} (axis cs:1000,50);
            \draw [dashed,blue,line width=0.75pt] (axis cs:20000,-75) -- node[left]{$u_1$} (axis cs:20000,50);
            
            \draw [dashed,black,line width=0.75pt] (axis cs:0,0) -- node[above]{$y_1$} (axis cs:20000,0);
            \draw [dashed,black,line width=0.75pt] (axis cs:0,180) -- node[above]{$y_u$} (axis cs:10000,180);
            \end{axis}
        \end{tikzpicture}
    
        \caption{\textbf{Property $\mathbf{\varphi_1}$.} 
        $\fsat = 1500 - y_1$ encodes the specification $\varphi_1$: ``The score for $y_1$ is always below 1500''.
        The specification also includes input constraint $l_1 \leq x_1 \leq u_1$.
        }
        \label{fig:sat-function-box}
    \end{subfigure}\hspace{0.5cm}
    \begin{subfigure}[b]{0.46\textwidth}
        \begin{tikzpicture}
            \begin{axis}[
                width=5.5cm,
                height=4cm,
                xlabel={$x_1$},
                ylabel={$y$},
                ylabel shift=-8pt,
                xmin=0, xmax=5000,
                ymin=-50, ymax=150,
                scaled x ticks={base 10:-3},
                xtick={0, 1000, 2000, 3000, 4000, 5000},
                ytick={-50, 0, 50, 100, 150},
                legend cell align={left},
                legend style={font=\scriptsize,anchor=north west},
                label style={font=\footnotesize},
                tick label style={font=\footnotesize},
                reverse legend,
                ymajorgrids=true,
                grid style=dashed,
                legend pos=north east,
                label style={font=\scriptsize},
                tick label style={font=\scriptsize}  
            ]
            
            \addplot[Colors-A,line width=0.75pt,name path=A] table [x=x, y=y1minusy2, col sep=comma] {data/sample_file_3d.csv};
            \addlegendentry{$\fsat$}
            
            \addplot[Colors-C,line width=1pt] table [x=x, y=y2, col sep=comma] {data/sample_file_3d.csv};
            \addlegendentry{$\max\limits_{i \neq 1}(y_i)$}
           
            \addplot[black,line width=0.75pt] table [x=x, y=y1, col sep=comma] {data/sample_file_3d.csv};
            \addlegendentry{$y_1$}
            
            \draw [name path=B,dashed,black,line width=0.75pt] (axis cs:0,0) -- node[below]{} (axis cs:5000,0);
            
            \addplot[red,opacity=0.2,forget plot] fill between[of=A and B,soft clip={domain=0:1000}]; 
            
            \addplot[red,opacity=0.2,forget plot] fill between[of=A and B,soft clip={domain=4000:5000}]; 
    
            \end{axis}
            \begin{axis}[
                width=5.5cm,
                height=4cm,
                xmin=0, xmax=5000,
                ymin=-5, ymax=15,
                scaled x ticks={base 10:-3},
                xtick={0},
                ylabel={$\fsat$},
                ylabel shift=-8pt,
                yticklabel pos=right,
                ytick={-5, 0, 5, 10, 15},
                ytick style={draw=none},
                label style={font=\footnotesize},
                tick label style={font=\footnotesize},
                ymajorgrids=false,
                tickwidth=0,
                grid style=none,
                label style={font=\scriptsize},
                tick label style={font=\scriptsize}  
            ]
            \end{axis}
        \end{tikzpicture}
        \caption{\textbf{Property $\mathbf{\varphi_2}$.} $\fsat = -y_1 + \max_{i \neq 1}(y_i)$ encodes specification $\varphi_2$: ``$y_1$ is never the maximum value''.
        The red curve has been scaled by a factor of 10 for better visibility.}
        \label{fig:sat-function-min-max}
    \end{subfigure}
    \caption{\textbf{Running example.} Safety properties on the ACAS Xu DNNs are mapped to satisfaction functions $\fsat$, which map counter-examples (safety violations) to negative values, shaded in red.
    The two examples illustrate the properties $\varphi_1$ and $\varphi_2$ from \Cref{tab:example-specification}.}
    \label{fig:sat-function}
\end{figure}

The satisfaction function $\fsat$ enables us to turn the problem of finding counter-examples witnessing a specification violation of a DNN into a multivariate optimization problem.
Note that both the DNN and the specification are fully captured by $\fsat$ and hence we can call any off-the-shelf black-box optimization algorithm with $X_\varphi$ as the input bounds and $\fsat$ as the objective function to be minimized.
The optimization procedure used here \cite{Endres18} was chosen experimentally. For details refer \iftoggle{arxiv}{\Cref{sec:global-optimization-method}}{to Sect. B of the supplementary material \cite{supplementary}}.
Optimization tools are efficient in driving a function, here $\fsat$, toward its minimum; hence our approach often finds counter-examples much faster than other approaches.

\section{Repair Framework}\label{sec:repair-framework}

In the previous section, we have seen how to find counter-examples that violate the specification of a DNN.
In this section, we build a framework around that algorithm to \emph{repair} the DNN.
By ``repair'' we mean to modify the network parameters such that the new DNN satisfies the specification.
However, modifying the network parameters generally changes the accuracy of the DNN as well.
Thus, as a second goal, we intend to preserve the accuracy of the DNN as much as possible.
Our repair technique can be summarized as follows: 
\textit{minimize} the loss in accuracy of the DNN \textit{such that} the DNN satisfies the given specification.

\medskip

DNN training uses unconstrained optimization of a \emph{loss function}~\cite{DBLP:books/daglib/0040158}.
In contrast, \emph{constrained} optimization problems can be stated as follows:
\begin{align*}
    \text{minimize } f(\vec{\theta}) \text{ such that } c_i(\vec{\theta}) \geq 0 \text{ for all } i \in \{1, \dots, v\}.
\end{align*}
Here, $f$ is a loss function and the $c_i$ are constraints under which a point $\vec{\theta}$ is admissible to the problem.

We introduce constraints into the training procedure by incorporating penalty functions \cite{smith1997penalty} into the loss function.
We want to minimize this loss function such that it satisfies the additional constraints $c_i$, which are assigned a positive penalty weight $\mu_i$, defining the penalized objective function problem as
\begin{equation}\label{eq:penalized-objective-function}
    \underset{\vec{\theta}}{\argmin} \,\, f(\vec{\theta}) + \sum_{i=1}^v \mu_i \cdot c^+_i(\vec{\theta})
\end{equation}
where penalty function $c^+_i$ is defined as $c^+_i(\vec{\theta}) = \max(0, -c_i(\vec{\theta}))$.
Intuitively, the penalty function forces the unconstrained optimization algorithm, which is used to solve \Cref{eq:penalized-objective-function}, to minimize the constraint violation: If a constraint is violated, it adds a large positive term to the objective function.

By enhancing the training procedure using the penalized loss function defined in  \Cref{eq:penalized-objective-function}, training both incorporates model accuracy (since the old loss function is part of the new loss function) and decreases the violation of the counter-examples $\vec{x}_1^c, \dots, \vec{x}_v^c$.
After each training iteration, the penalty weights $\vec{\mu}$ are updated, and the current model parameters are used as starting points for subsequent iterations.

\medskip

Algorithm~\ref{algo:penalty-function} gives a detailed view of one repair step.
As inputs, the algorithm takes a DNN $N$ to repair with weights $\vec{\theta}$, the original data set to train $N$ (if not available, a uniform sampling of $N$ can be used), a set of counter-examples $\vec{x}^c_i$ for the set of safety properties $\varphi_i$ that have been found during the counter-example generation step, and the training loss function $\lambda$ originally used to train $N$.

\begin{algorithm}[t]
 \KwIn{DNN \(N: \mathbb{R}^n \to \mathbb{R}^m\) with parameters \(\vec{\theta}\), data set $(X, Y)$, counter-examples and properties \((\mathbf{x}^c_{1}, \varphi_{1}), (\mathbf{x}^c_{2}, \varphi_{2}), \ldots, (\mathbf{x}^c_{v}, \varphi_{v})\), loss function \(\lambda: \mathbb{R}^{\text{dim}(\vec{\theta})} \to \mathbb{R}\).}
 \KwData{Penalty weights \(\mu_{1}, \mu_{2}, \ldots, \mu_{v}\), constraint functions \(c_i: \mathbb{R}^{\text{dim}(\vec{\theta})} \to \mathbb{R}\) for \(i \in \{1, \ldots, v\}\), penalized loss function \(\lambda': \mathbb{R}^{\text{dim}(\vec{\theta})} \to \mathbb{R}\).}
 \KwOut{Repaired DNN \(N\) with new parameters \(\vec{\theta'}\).}
 \ForEach{\(i \in \{1, \ldots, v\}\)}{
    \(\mu_i \leftarrow 1\)\tcp*{default initial penalty weight}
    
    \(c_i(\vec{\theta}') \leftarrow \fsat_{\varphi_i}(N_{\vec{\theta}'}(\mathbf{x}^c_i))\) \label{alg:line:fsat}\;
 }
 \While{\(\exists j \in \{1, \ldots, v\}: N(\mathbf{x}^c_{j}) \notin Y_{\varphi_{j}}\)}{
 	\(\lambda'(\vec{\theta}') \leftarrow \lambda(\vec{\theta}') + \sum_{i=1}^v \mu_i \cdot c_i^+(\vec{\theta}')\) \label{alg:line:loss} \;
 	
    \text{train}(\(N, (X, Y), \lambda'\))\;
     
    \ForEach{\(i \in \{1, \ldots, v\}\)}{
        \If{\(N(\mathbf{x}^c_i) \notin Y_{\varphi_i}\)}{
            \(\mu_i \leftarrow 2\mu_i\)\tcp*{default penalty increase strategy}
        }
    }
 }
 \caption{Penalty function repair.}
 \label{algo:penalty-function}
\end{algorithm}

The algorithm first iterates over all the counter-example/property pairs that it was given, assigning each counter-example $\vec{x}_i^c$ an initial penalty weight $\mu_i = 1$.
Then, we build the constraint function $c_i$ by using the $\fsat$ function and applying it to the counter-example and current network weights (line~\ref{alg:line:fsat}).

In the second loop, the DNN $N$ is iteratively trained.
In line~\ref{alg:line:loss}, each counter-example is converted into a penalized constraint.
Each unsatisfied constraint adds a positive term to the loss function's objective value.
Therefore, the loss function is likely not minimal when there are any unsatisfied constraints.
This updated loss function is then used to re-train the DNN.
After training, we check if the counter-examples still occur in the re-trained DNN. If so, the penalty weight is doubled; otherwise, the successfully repaired DNN with new weights $\vec{\theta'}$ is returned.

The \textit{counter-example generation} component outputs counter-examples and hands them over to the \textit{repair} component.
This process is repeated iteratively.
As outlined in \Cref{sec:counter-example-generation}, the satisfaction function $\fsat$ is input to a global optimization algorithm.
Intuitively, it would be possible to exit the optimization routine early when any negative value is detected.
However, in our experiments we found that taking the counter-examples at the minimum, expressing a higher violation severity, ultimately results in more successful repairs.

\section{Evaluation}\label{sec:evaluation}

This section presents our experimental evaluation, demonstrating the algorithm's effectiveness in repairing a neural network subject to a safety specification.
Our implementation of \tool uses PyTorch for DNN interactions.
We conduct the experiments on an Intel Xeon E5-2680 CPU with 2.4\,GhZ and 170\,GB of memory.
For the final verification step, our implementation uses the verifier ERAN \cite{DBLP:journals/pacmpl/SinghGPV19}.
We note that ERAN uses an internal timeout and may hence return \textsc{Unknown}.
This can be circumvented by increasing the timeout, but in the evaluation we use the default settings of ERAN and give up with the result \textsc{Unknown} instead.

\subsection{Experimental Setup}

Our experiments use 36 networks for the tasks of aircraft collision avoidance and image classification, where we replicate the benchmark networks from \cite{DBLP:journals/pacmpl/SinghGPV19} for the latter.
In detail:

\begin{itemize}
    \item The collision avoidance system \emph{ACAS Xu} \cite{julian2016policy} consists of 45 fully connected DNNs, $N_{1,1}$ to $N_{5,9}$.
    The inputs and outputs are described in 
    \iftoggle{arxiv}{\Cref{tab:acasxu-overview} in \Cref{sec:acas-xu-properties}}{Sect. A of the supplementary material \cite{supplementary}}.
    Each of the 45 networks has 6 hidden layers with 50 ReLU nodes.
    We use the 34 networks that were shown to violate at least one of the safety properties from \cite{DBLP:conf/cav/KatzBDJK17} to evaluate our method.
    Because the training data is not openly available, we resort to a uniform sampling of the original model and compare the repaired model to it in terms of classification accuracy (reminder: the percentage of correct classifications) and mean average error (MAE) between the classification scores of the original and repaired models.
    \item \emph{MNIST} \cite{lecun-mnisthandwrittendigit-2010} contains 70k grayscale images, showing a handwritten digit from 0 to 9, with $28 \times 28$ pixels.
    We use a fully connected DNN trained to a test accuracy of 97.8\% using DiffAI-defended training \cite{DiffAi}, with five hidden dense layers of 100 units each.
    \item \emph{CIFAR10} \cite{cifar10} contains 60k color images, showing an object from one of ten possible classes, with $32 \times 32$ pixels.
    We use the benchmark CNN from \cite{DBLP:journals/pacmpl/SinghGPV19}, which was trained to an accuracy of 58.6\%. It has two convolutional layers \cite{DBLP:journals/neco/LeCunBDHHHJ89} and a max-pooling layer, repeated once with 24 and 32 channels, respectively, followed by two dense layers with 100 units each.
\end{itemize}

\subsection{Counter-Example-Based Repair}

We compare \tool against three state-of-the-art repair techniques that we described in \Cref{par:related-work}: minimal modification (MM) by \citet{DBLP:conf/lpar/GoldbergerKAK20}, \mbox{nRepair} (NR) by \citet{DBLP:conf/qrs/DongSWWD21}, and DL2 by \citet{DBLP:conf/icml/FischerBDGZV19}.
We do not compare against \cite{DBLP:conf/pldi/SotoudehT21} because its specification encoding is only applicable to two dimensions for small-scale networks and thus neither supports the ACAS Xu \cite{DBLP:conf/cav/KatzBDJK17} nor any image classification robustness properties.
We run DL2 with 5 different values for the DL2 weight parameter: 0.01, 0.05, 0.1, 0.2, and 0.5 (for more details see \cite{DBLP:conf/icml/FischerBDGZV19}).
We analyze all final repair outcomes with ERAN to assess whether the repairs produced by the tools are genuinely safe.

\medskip

For the \emph{collision avoidance task}, we repair different problem instances: 34 DNNs subject to three different properties.
As an additional challenge, we also create a combined specification $\Phi = \{\varphi_1, \varphi_2, \varphi_3, \varphi_4, \varphi_8\}$ consisting of five properties.
We evaluate the classification accuracy (percentage of correct classifications) and mean average error (MAE) after repair to measure the level of correct functionality.
Because we have no access to the original training and test data sets, we use a uniform sampling from the original network as test data for calculating accuracy and MAE.
We set a timeout of three hours for all techniques.

\medskip

For the \emph{image classification task}, we repair a total of 100 cases for an $\ell_{\infty}$ robustness specification, with a robustness parameter $\epsilon = 0.03$, replicating the robustness experiment by \citet{DBLP:conf/qrs/DongSWWD21}.
Additionally, for MNIST, we compare batch repair of 10 and 25 counter-examples at the same time.
This, however, is only applicable to \tool and nRepair (NR), because MM and DL2 do not have this functionality.
We do not compare against minimal modification (MM) \cite{DBLP:conf/lpar/GoldbergerKAK20} because of problems that let the internally used Marabou solver \cite{DBLP:conf/cav/KatzHIJLLSTWZDK19} fail to generate any counter-examples. We have reported this error to the Marabou developers\footnote{\url{https://github.com/NeuralNetworkVerification/Marabou/issues/494}}.
Also, the authors of MM themselves already described their technique to perform sub-optimally for CNF properties because it relies on the exact encoding needed by Marabou.
Furthermore, because nRepair (NR) does not support convolutional layers, we cannot evaluate it on the network for CIFAR10.

We compare the number of successful repairs, test accuracy (to measure preservation of the model's functionality), and runtime.

\paragraph{Results.}

\begin{table}[t]
    \begin{center}
    \begin{tabularx}{\textwidth}{@{}l cccc cc c@{}}
    \toprule
    & \multicolumn{4}{c}{\textbf{Repair Outcome}} & \textbf{Accuracy[\%]} & \textbf{MAE} & \textbf{Runtime \![s]}  \\
    \textbf{Tool}   & \textsc{Success}\, & \,\textsc{Fail}\, & \,\textsc{Unknown}\, & \,\textsc{Timeout}   & median & median & median \\
    \midrule
    \tool      & 28          &  0 &  2 &  6 &  \textbf{99.5} & \textbf{0.1}  & 573.2 \\
    NR         & \textbf{29} &  0 &  1 &  6 &  87.6          & 1996.1        & \textbf{10.0} \\
    DL2        & 3           & 33 &  0 &  0 &  93.4          & 6.03          & 10840.7 \\
    MM         & 0           &  0 &  0 & 35 &  --            & --            & 10832.1 \\
    \bottomrule
    \end{tabularx}
    \end{center}
    \caption{\textbf{Safety repair results of the ACAS Xu DNNs}: \tool (this work), DL2 \cite{DBLP:conf/icml/FischerBDGZV19}, nRepair (NR) \cite{DBLP:conf/qrs/DongSWWD21}, and minimal modification (MM) \cite{DBLP:conf/lpar/GoldbergerKAK20}.
    We compare the cumulative repair outcome for all 35 instances, test accuracy and mean average error (MAE) after repair, and the median runtime.
    `\exsuccess' indicates successful repairs, `\exfailure' indicates failed repairs, `\extimeout' indicates a timeout, and `\exunknown' marks cases where the verifier returned \textsc{Unknown}.}
    \label{tab:repair-acas-xu-overview}
\end{table}

\begin{table}
    \begin{center}
\begin{tabularx}{\textwidth}{@{}cc|ccc|ccc|ccc@{}}
\toprule
         	  &       		& \multicolumn{3}{c|}{\textbf{Repair Outcome}} & \multicolumn{3}{c|}{\textbf{Accuracy [\%]}} & \multicolumn{3}{c}{\textbf{MAE}} \\
Spec   		  & Model       & \,\tool{ }   & NR         & DL2\,        & \,\tool{ } 		& NR 			 & DL2\,  & \,\tool{ } 			& NR 			  & DL2 \\
\midrule  
\(\varphi_2\) & \(N_{2,1}\) & \exsuccess & \exsuccess & \exfailure & \textbf{99.1}  & 83.9           &  --  &   \textbf{0.22}   & 2242.6 		&    --  	\\
\(\varphi_2\) & \(N_{2,2}\) & \exsuccess & \exsuccess & \exsuccess & \textbf{98.7}  & 85.1           & 93.4 &   \textbf{0.23}   & 2279.3 		&   6.29 	\\
\(\varphi_2\) & \(N_{2,3}\) & \exsuccess & \exsuccess & \exfailure & \textbf{99.3}  & 83.5           &  --  &   \textbf{0.13}   & 2420.6 		&    --  	\\
\(\varphi_2\) & \(N_{2,4}\) & \exsuccess & \extimeout & \exfailure & \textbf{99.5}  &  --            &  --  &   \textbf{0.09}   &    --  		&    --  	\\
\(\varphi_2\) & \(N_{2,5}\) & \extimeout & \exsuccess & \exfailure &  --            & \textbf{84.1}  &  --  &    --             & \textbf{2433.8} &    --  \\
\(\varphi_2\) & \(N_{2,6}\) & \extimeout & \exsuccess & \exfailure &  --            & \textbf{85.6}  &  --  &    --             & \textbf{2303.7} &    --  \\
\(\varphi_2\) & \(N_{2,7}\) & \exsuccess & \exsuccess & \exsuccess & 14.5           & \textbf{87.0}  & 89.5 &   \textbf{0.15}   & 1644.8 		&   5.97 	\\
\(\varphi_2\) & \(N_{2,8}\) & \exsuccess & \exsuccess & \exfailure & \textbf{99.6}  & 87.3           &  --  &   \textbf{0.14}   & 663.6  		&    --  	\\
\(\varphi_2\) & \(N_{2,9}\) & \exsuccess & \exsuccess & \exfailure & \textbf{99.8}  & 88.6           &  --  &   \textbf{0.13}   & 2405.1 		&    --  	\\
\(\varphi_2\) & \(N_{3,1}\) & \exsuccess & \exsuccess & \exfailure & \textbf{98.6}  & 77.5           &  --  &   \textbf{0.27}   & 6.1    		&    --  	\\
\(\varphi_2\) & \(N_{3,2}\) & \exsuccess & \extimeout & \exfailure & \textbf{99.9}  &  --            &  --  &   \textbf{0.10}   &    --  		&    --  	\\
\(\varphi_2\) & \(N_{3,4}\) & \exsuccess & \extimeout & \exfailure & \textbf{99.5}  &  --            &  --  &   \textbf{0.10}   &    --  		&    --  	\\
\(\varphi_2\) & \(N_{3,5}\) & \exsuccess & \exsuccess & \exfailure & \textbf{99.5}  & 84.2           &  --  &   \textbf{0.09}   & 2384.2 		&    --  	\\
\(\varphi_2\) & \(N_{3,6}\) & \exunknown & \exsuccess & \exfailure &  --            & \textbf{81.8}  &  --  &    --             & \textbf{2387.0} &    --  \\
\(\varphi_2\) & \(N_{3,7}\) & \exsuccess & \exsuccess & \exfailure & \textbf{99.7}  & 87.0           &  --  &   \textbf{0.11}   & 2251.7 		&    --  	\\
\(\varphi_2\) & \(N_{3,8}\) & \exsuccess & \exsuccess & \exfailure & \textbf{99.7}  & 87.9           &  --  &   \textbf{0.09}   & 1311.1 		&    --  	\\
\(\varphi_2\) & \(N_{3,9}\) & \extimeout & \exsuccess & \exfailure &  --            & \textbf{87.2}  &  --  &    --             & \textbf{2442.5} &    --  \\
\(\varphi_2\) & \(N_{4,1}\) & \exsuccess & \exsuccess & \exfailure & \textbf{99.8}  & 87.7           &  --  &   \textbf{0.11}   & 1939.3 		&    --  	\\
\(\varphi_2\) & \(N_{4,3}\) & \exsuccess & \exsuccess & \exsuccess & \textbf{99.4}  & 87.8           & 96.0 &   \textbf{0.13}   & 2419.4 		&   6.03 	\\
\(\varphi_2\) & \(N_{4,4}\) & \exsuccess & \exsuccess & \exfailure & \textbf{99.5}  & 87.9           &  --  &   \textbf{0.10}   & 1090.5 		&    --  	\\
\(\varphi_2\) & \(N_{4,5}\) & \exsuccess & \exsuccess & \exfailure & \textbf{99.4}  & 87.5           &  --  &   \textbf{0.08}   &    2.7 		&    --  	\\
\(\varphi_2\) & \(N_{4,6}\) & \exsuccess & \exsuccess & \exfailure & \textbf{99.6}  & 89.8           &  --  &   \textbf{0.07}   & 1329.1 		&    --  	\\
\(\varphi_2\) & \(N_{4,7}\) & \exsuccess & \exsuccess & \exfailure & \textbf{98.3}  & 88.9           &  --  &   \textbf{0.14}   & 1996.1 		&    --  	\\
\(\varphi_2\) & \(N_{4,8}\) & \exsuccess & \exsuccess & \exfailure & \textbf{99.1}  & 88.6           &  --  &   \textbf{0.16}   & 584.3  		&    --  	\\
\(\varphi_2\) & \(N_{4,9}\) & \exsuccess & \exsuccess & \exfailure & \textbf{99.5}  & 88.8           &  --  &   \textbf{0.06}   & 2292.2 		&    --  	\\
\(\varphi_2\) & \(N_{5,1}\) & \exsuccess & \exsuccess & \exfailure & \textbf{99.5}  & 87.5           &  --  &   \textbf{0.11}   & 2227.2 		&    --  	\\
\(\varphi_2\) & \(N_{5,2}\) & \exsuccess & \exsuccess & \exfailure & \textbf{99.7}  & 87.6           &  --  &   \textbf{0.10}   & 2438.8 		&    --  	\\
\(\varphi_2\) & \(N_{5,4}\) & \exsuccess & \exsuccess & \exfailure & \textbf{99.6}  & 87.8           &  --  &   \textbf{0.09}   & 405.1  		&    --  	\\
\(\varphi_2\) & \(N_{5,5}\) & \extimeout & \exsuccess & \exfailure &  --            & \textbf{87.9}  &  --  &    --             & \textbf{749.8}  &    --  \\
\(\varphi_2\) & \(N_{5,6}\) & \exsuccess & \extimeout & \exfailure & \textbf{99.5}  &  --            &  --  &   \textbf{0.12}   &     -- 		&    --  	\\
\(\varphi_2\) & \(N_{5,7}\) & \exsuccess & \exsuccess & \exfailure & \textbf{98.4}  & 88.0           &  --  &   \textbf{0.16}   & 957.7  		&    --  	\\
\(\varphi_2\) & \(N_{5,8}\) & \exsuccess & \exsuccess & \exfailure & \textbf{99.4}  & 87.7           &  --  &   \textbf{0.11}   & 382.5  		&    --  	\\
\(\varphi_2\) & \(N_{5,9}\) & \exsuccess & \exsuccess & \exfailure & \textbf{98.1}  & 87.9           &  --  &   \textbf{0.13}   & 181.2  		&    --  	\\
\(\varphi_7\) & \(N_{1,9}\) & \exunknown & \extimeout & \exfailure &  --  			&  --  			 &  --  &    --  			&     --  		&    --  	\\
\(\varphi_8\) & \(N_{2,9}\) & \extimeout & \exunknown & \exfailure &  --  			&  --  			 &  --  &    --  			&     -- 		&    --  	\\
\(\Phi_1\)    & \(N_{2,9}\) & \extimeout & \extimeout & \exfailure & --  			& --  	   		 &  --  &    --  			&    -- 		&    -- 	\\
\midrule
           	  &             &         28 & \textbf{29}&          3 &\textbf{99.5}   & 87.6  & 93.4  &    \textbf{0.1}  & 1996.1 &  6.03  \\
              &             & \multicolumn{3}{c|}{\textsc{Success} frequency} & \multicolumn{3}{c|}{median} & \multicolumn{3}{c}{median} \\
\bottomrule
\end{tabularx}
\end{center}
    \caption{\textbf{Safety repair results of the ACAS Xu DNNs}: \tool (this work), nRepair (NR) \cite{DBLP:conf/qrs/DongSWWD21}, and DL2 \cite{DBLP:conf/icml/FischerBDGZV19}.
    Each row shows the results for one benchmark instance with the property/specification in the first column (see \iftoggle{arxiv}{\Cref{sec:acas-xu-properties}}{Sect. A of the supplementary material \cite{supplementary}}) and the DNN in the second column (names taken from~\cite{julian2016policy}).
    `\exsuccess' indicates successful repairs, `\exfailure' indicates failed repairs, `\extimeout' indicates a timeout, and `\exunknown' marks cases where the verifier returned \textsc{Unknown}.}
    \label{tab:repair-acas-xu}
\end{table}

\Cref{tab:repair-acas-xu-overview} shows the aggregated repair results for the collision avoidance task, counting the number of successful repairs, failures, unknown outcomes, and timeouts.
Additionally, it shows the accuracy and mean average error (MAE) of the DNNs after they have been repaired by the respective method. Results per benchmark instance are given in \Cref{tab:repair-acas-xu}.

\tool successfully repairs 28 of the 36 instances.
The DNNs that have been repaired by \tool achieve the highest classification accuracy with 99.5 and lowest mean average error with 0.1.
Two times ERAN (and hence \tool) terminates with a result of \textsc{Unknown}.
Six times, \tool could not repair the DNN within the time limit.

nRepair (NR) repairs one more instance than \tool, but at the cost of yielding the lowest test accuracy of the three successful tools. Also, the mean average error (MAE) is extremely high: NR does not consider the classification scores but instead is only concerned with maintaining the correct class, leading to large deviations from the original policy.

While DL2's repair accuracy is still reasonably high, with a median value of 93.4 and MAE of 6.03, it only delivers three successful repairs and fails to repair the other 33 instances. This is likely due to DL2's algorithm design, which includes a hard-coded cross-entropy loss function and no termination criteria beyond performing a large number of iterations. Experiments using DL2 with a task loss function did not result in any successful repair.

Minimal modification (MM) times out for every instance.
We explain this with the high computational cost using the SMT-based method Marabou \cite{DBLP:conf/cav/KatzHIJLLSTWZDK19} to modify network parameters directly.

\begin{table}[t]
    \begin{center}
    \begin{tabularx}{\textwidth}{X cccc c c}
    \toprule
         & \multicolumn{4}{c}{\textbf{Repair Outcome}}&  \textbf{Accuracy \![\%]} & \textbf{Runtime [s]} \\
    \textbf{Tool} & \,\,\textsc{Success}\,\, & \,\,\textsc{Fail}\,\, & \,\,\textsc{Unknown}\,\, & \,\,\textsc{Timeout}            & median  & median \\
    \midrule
    \tool   & \textbf{100} &   0 &   0 &   0 & 96.0           & 163.6\\
    NR      & 84           &  16 &   0 &   0 & \textbf{97.3}  & \textbf{36.0} \\
    DL2     & 10           &  82 &   0 &   8 & 91.9           & 18839.2 \\
    \bottomrule
    \end{tabularx}
    \end{center}
    \caption{\textbf{Robustness repair results of the DiffAI-defended \cite{DiffAi} MNIST DNN}: \tool (this work), nRepair (NR) \cite{DBLP:conf/qrs/DongSWWD21}, and DL2 \cite{DBLP:conf/icml/FischerBDGZV19}.
    We compare the cumulative repair outcome for all 100 instances, the test accuracy (minimum, median, and maximum) after repair, and the median runtime.}
    \label{tab:repair-mnist}
\end{table}

\begin{table}[t]
\begin{center}
\footnotesize
\begin{tabularx}{\textwidth}{@{}X cc @{\hskip 1em} cccc c@{}}
\toprule
       &   &  &  \multicolumn{4}{c}{\textbf{Repair Outcome}} & \textbf{Accuracy \![\%]} \\
\textbf{Tool} & \textbf{Points} & \textbf{Inst.}  & \,\,\textsc{Success}\,\, & \,\,\textsc{Fail}\,\, & \,\,\textsc{Unknown}\,\, & \,\,\textsc{Timeout}    & median   \\
\midrule
        & 1 & 100 & \textbf{100}  &   0 &   0 &   0 & 96.0          \\
\tool   & 10 & 10 & \textbf{10}   &   0 &   0 &   0 & \textbf{93.1} \\
        & 25 & 4  & \textbf{3}    &   0 &   1 &   1 & \textbf{93.5} \\
\midrule
        & 1  & 100 &  84 &  16 &   0 &   0 & \textbf{97.3} \\
NR      & 10 & 10  &   0 &  10 &   0 &   0 & -- \\
        & 25 & 4   &   0 &  4  &   0 &   0 & -- \\
\bottomrule
\end{tabularx}
\end{center}
\caption{\textbf{Collective robustness repair results of the DiffAI-defended \cite{DiffAi} MNIST DNN}: \tool (this work), and nRepair (NR) \cite{DBLP:conf/qrs/DongSWWD21}.
    We compare the cumulative repair outcome for all 100 instances, supplemented by two partitions into groups of 10 and 25 points per instance, and the test accuracy (minimum, median, and maximum) after repair.}
        \label{tab:repair-mnist-instances}
\end{table}

\smallskip

For repairing the DiffAI-defended \cite{DiffAi} MNIST DNNs, the results are presented in \Cref{tab:repair-mnist}.
The non-aggregated data is provided in \iftoggle{arxiv}{\Cref{tab:repair-mnist-non-aggregated}}{Sect. C of the supplementary material \cite{supplementary}}.
\tool successfully repairs all 100 instances, with a median test accuracy after repair of 96\%.
\Cref{tab:repair-mnist-instances} shows collective repair results, with \tool successfully repairing 10 counter-examples at once, and for three out of four instances, it repairs 25 counter-examples in one run.
nRepair repairs only 84 instances, with a slightly higher accuracy of 97.3\%, and better runtime.
Yet, it fails when tasked to collectively repair more than one counter-example in one execution.
DL2 only repairs ten instances, failing to repair 82, and timing out on eight. Also, it achieves the lowest median repair accuracy of 91.9\% for the successfully repaired cases. 
This indicates that DL2's counter-example generation cannot handle the DiffAI defense mechanism particularly well.
Also, its runtime is two orders of magnitude slower than \tool and nRepair (NR).

\begin{table}[t]
    \begin{center}
    \begin{tabularx}{\textwidth}{X cccc c c}
    \toprule
        & \multicolumn{4}{c}{\textbf{Repair Outcome}}&  \textbf{Accuracy \![\%]} & \textbf{Runtime [s]} \\
    \textbf{Tool} & \,\,\textsc{Success}\,\, & \,\,\textsc{Fail}\,\, & \,\,\textsc{Unknown}\,\, & \,\,\textsc{Timeout}             & median           & median \\
    \midrule
    \tool & 88           &   5 &   1 &   6 &  \textbf{69.6} & \textbf{4269.7} \\
    DL2   & \textbf{100} &   0 &   0 &   0 &  61.8          & 26724.1\, \\
    \bottomrule
    \end{tabularx}
    \end{center}
    \caption{\textbf{Robustness repair results of the CIFAR10 CNN}: \tool (this work), minimal modification (MM) \cite{DBLP:conf/lpar/GoldbergerKAK20}, and DL2 \cite{DBLP:conf/icml/FischerBDGZV19}.
    We compare the cumulative repair outcome for all 100 instances, the test accuracy (minimum, median, and maximum) after repair, and the median runtime.}
    \label{tab:repair-cifar10}
\end{table}

For the CIFAR10 CNN, we present the results in \Cref{tab:repair-cifar10}, with non-aggregated data given in \iftoggle{arxiv}{\Crefrange{tab:repair-cifar10-non-aggregated-a}{tab:repair-cifar10-non-aggregated-b}}{Sect. C of the supplementary material \cite{supplementary}}.
\tool and DL2 are successful for 88 and 100 instances, respectively.
\tool maintains the highest mean test accuracy after repair with a value of 69.6\%, with DL2 sacrificing quantity over accuracy, only achieving 61.6\%.
Also, DL2 is six times slower than \tool.
We observe a lower test accuracy than for the collision avoidance task and the MNIST network for all approaches.
These results suggest that the higher input dimension of CIFAR10 ($32\times32\times3$ versus $28\times28$ with MNIST) is a limiting factor not only for \tool but all the repair methods that we have evaluated.

\section{Discussion}

In \Cref{sec:overview} we gave an overview of our procedure.
Maintaining the correct functionality of the repaired DNN is a fundamental challenge:
a successful repair is worthless if we compromise accuracy for it.
We have demonstrated that \tool consistently achieves high performance on several types of networks, often outperforming state-of-the-art repair methods.
The performance of \tool for the collision avoidance and image classification tasks is the best among the methods compared in the evaluation in terms of successful repairs, accuracy, or scalability, while still providing formal safety guarantees. This demonstrates that \tool is highly suitable for safety-critical applications.

The quality and success of our repair technique stems from its algorithmic design. Instead of relying on accurate yet computationally expensive encodings backed by SMT or linear programming, we use heuristics based on global optimization to produce counter-examples fast.

The insights gained in our experiments also support the hypothesis that using a task loss function that integrates into standard DNN training procedures, as in our approach or in \cite{DBLP:conf/icml/FischerBDGZV19}, is not only more efficient, but also better in preserving the DNN's accuracy.
Although the approaches in \cite{DBLP:conf/lpar/GoldbergerKAK20,DBLP:conf/pldi/SotoudehT21} try to keep the modifications of DNN parameters minimal, original training data is not considered, and the experiments demonstrate that there may still be a significant negative impact on the model's test accuracy.

One limitation of our repair approach is that, for image classification tasks, it does not always return with a successfully repaired DNN within the specified time limit.
Future research may seek the combination of \tool with different penalty functions during re-training to gain insights into the quality of repair results when applied to other network architectures.

\section{Conclusion}

We presented \tool, an efficient technique for generating counter-examples and repairing deep neural networks (DNNs) such that they comply with a formal specification.
Due to its black-box nature, \tool supports arbitrary DNNs and specifications.
Our technique consists of two main components.
The first component (counter-example generation) translates the specification into an objective function, which becomes negative for all network inputs that violate the specification, and then detects counter-examples using a global optimization method.
The second component (repair) utilizes these counter-examples to make the DNN safe via penalized re-training.
\tool finally gives a safety guarantee for the resulting DNN using a verifier.
Experimental results demonstrate that \tool can be used effectively for both counter-example generation and repair of DNNs, generating useful counter-examples, achieving a high quality of repair, and outperforming existing approaches.

\subsection*{Acknowledgments}

This research was partly supported by DIREC - Digital Research Centre Denmark and the Villum Investigator Grant S4OS.

\bibliography{bibliography.bib}

\iftoggle{arxiv}{
    \newpage
    \appendix
    \numberwithin{table}{section}
    \numberwithin{figure}{section}
    \section{ACAS Xu Properties}\label{sec:acas-xu-properties}

An overview about the ACAS Xu input variables and output advisories is given in \Cref{tab:acasxu-overview}. 
The ten properties provided by \citet{DBLP:conf/cav/KatzBDJK17} are listed in \Cref{tab:experiments-acas-xu-specification}.

\begin{table}[H]
    \footnotesize
    \centering
    \begin{tabular}{@{}c p{5.7cm} c l @{}} \toprule
        Input $\vec{x}$\,\,  & Semantics                                                         & Output  $\vec{y}$ \,\,  & Semantics \\ \midrule
        $\rho$          & Distance from ownship to intruder                                 & $y_1$   & Clear-of-Conflict (COC) \\
        $\theta$        & Angle to intruder relative to ownship heading direction           &  $y_2$    & Weak left (WL)\\
        $\psi$          & Heading angle of intruder relative to ownship heading direction   & $y_3$    & Weak right (WR)  \\
        $v_\text{own}$  & Speed of ownship                                                  & $y_4$    & Strong left (SL) \\
        $v_\text{int}$  & Speed of intruder                                                 & $y_5$    & Strong right (SR) \\
        \bottomrule
    \end{tabular}
    \caption{\textbf{ACAS Xu network parameters.} Input variables and output classes for the ACAS Xu DNNs \cite{julian2016policy}.}
    \label{tab:acasxu-overview}
\end{table}
\begin{table}[H]
    \scriptsize
    \centering
    \begin{tabular}{@{}l p{1.35cm} c p{9.5cm}@{}}
        \toprule
        Spec & Model & \multicolumn{2}{c}{Definition} \\
        \midrule
        \(\varphi_1\) & all & 
        \(X_\phi\) & \([55947.691, \infty] \times \mathbb{R}^2 \times [1145, \infty] \times [-\infty, 60] \)  \\
        && \(Y_\phi\) &\(\left\lbrace \mathbf{y} \,|\, y_{1} \leq 1500 \right\rbrace\) \\[0.5em]
        
        \(\varphi_2\) & \(N_{2,1}\)--\(N_{5,9}\) & 
        \(X_\phi\) & \([55947.691, \infty] \times \mathbb{R}^2 \times [1145, \infty] \times [-\infty, 60]\) \\
        && \(Y_\phi\) & \(\left\lbrace \mathbf{y} \,|\, y_{1} \leq \max_{j \neq 1} y_j \right\rbrace\) \\[0.5em]
        
        \(\varphi_3\) & all except \(N_{1,7}\)--\(N_{1,9}\) &
        \(X_\phi\) & \([1500, 1800] \times [-0.06, 0.06] \times [3.10, \infty] \times [980, \infty] \times [960, \infty]\) \\
        && \(Y_\phi\) & \(\left\lbrace \mathbf{y} \,|\, y_{1} \geq \min_{j \neq 1} y_j \right\rbrace\) \\[0.5em]
        
        \(\varphi_4\) & all except \(N_{1,7}\)--\(N_{1,9}\) &
        \(X_\phi\) & \([1500, 1800] \times [-0.06, 0.06] \times [0, 0] \times [1000, \infty] \times [700, 800]\) \\
        && \(Y_\phi\) & \(\left\lbrace \mathbf{y} \,|\, y_{1} > \min_{j \neq 1} y_j \right\rbrace\) \\[0.5em]
        
        \(\varphi_5\) & \(N_{1,1}\) & 
        \(X_\phi\) & \([250, 400] \times [0.2, 0.4] \times [-3.141592, -3.141592 + 0.005] \times [100, 400] \times [0, 400]\) \\
        && \(Y_\phi\) & \(\left\lbrace \mathbf{y} \,|\, y_{5} \geq \min_{j \neq 5} y_j \right\rbrace\) \\[0.5em]
        
        \(\varphi_6\) & \(N_{1,1}\) &
        \(X_\phi\) & \([12000, 62000] \times [0.7, 3.141592] \cup [-3.141592, -0.7] \times [-3.141592, -3.141592 + 0.005] \times [100, 1200] \times [0, 1200]\)  \\
        && \(Y_\phi\) & \(\left\lbrace \mathbf{y} \,|\, y_{1} \geq \min_{j\neq 1} y_j \right\rbrace\) \\[0.5em]
        
        \(\varphi_7\) & \(N_{1,9}\) & 
        \(X_\phi\) & \([0, 60760] \times [-3.141592, 3.141592]^2 \times [100, 1200] \times [0, 1200]\)  \\
        && \(Y_\phi\) & \(\left\lbrace \mathbf{y} \,|\, \min_{j_1 \in \{4, 5\}}\mathbf{y}_{j_1} > \min_{j_2 \not\in \{4, 5\}} \mathbf{y}_{j_2}  \right\rbrace\) \\[0.5em]
        
        \(\varphi_8\) & \(N_{2,9}\) & 
        \(X_\phi\) & \([0, 60760] \times [-3.141592, -2.356194] \times [-0.1, 0.1] \times [600, 1200]^2 \)  \\
        && \(Y_\phi\) & \(\left\lbrace \mathbf{y} \,|\,\min_{j_1 \in \{1, 2\}}\mathbf{y}_{j_1} < \min_{j_2 \not\in \{1, 2\}} \mathbf{y}_{j_2}  \right\rbrace\) \\[0.5em]
        
        \(\varphi_9\) & \(N_{3,3}\) & 
        \(X_\phi\) & \([2000, 7000] \times [-0.4, -0.14] \times [-3.141592, -3.141592 + 0.01] \times [100, 150] \times [0, 150] \)  \\
        && \(Y_\phi\) & \(\left\lbrace \mathbf{y} \,|\, y_{4} \geq \min_{j \neq 4} y_j  \right\rbrace\) \\[0.5em]
        
        \(\varphi_{10}\) & \(N_{4,5}\) & 
        \(X_\phi\) & \([36000, 60760] \times [0.7, 3.141592] \times [-3.141592, -3.141592 + 0.01] \times [900, 1200] \times [600,1200] \)  \\
        && \(Y_\phi\) & \(\left\lbrace \mathbf{y} \,|\, y_{1} \geq \min_{j \neq 1} y_j  \right\rbrace\)\\
        \bottomrule
    \end{tabular}
    \caption{\textbf{ACAS Xu specifications} by \cite{DBLP:conf/cav/KatzBDJK17}. 
    The output indices are assumed to correspond to the following actions: \(y_1\): COC, \(y_2\): WL, \(y_3\): WR, \(y_4\): SL, \(y_5\): SR. 
    }
    \label{tab:experiments-acas-xu-specification}
\end{table}

\section{Choice of Global Optimization Method}

As a preliminary experiment, we evaluate three optimization methods: basin hopping \cite{DBLP:journals/advai/OlsonHMS12}, SHGO \cite{Endres18}, and differential evolution \cite{DBLP:journals/jgo/StornP97}, on property $\varphi_2$ and all 45 networks of the collision avoidance task. 
For this purpose, we compare runtime, the counter-example's objective value (the numerical output of the $\fsat$ function for the counter-example) which gives an estimate about its suitability for repair, and the number of function evaluations, shown in \Cref{fig:optimizers}.

\begin{figure}
    \centering
    \includegraphics[width=0.6\textwidth]{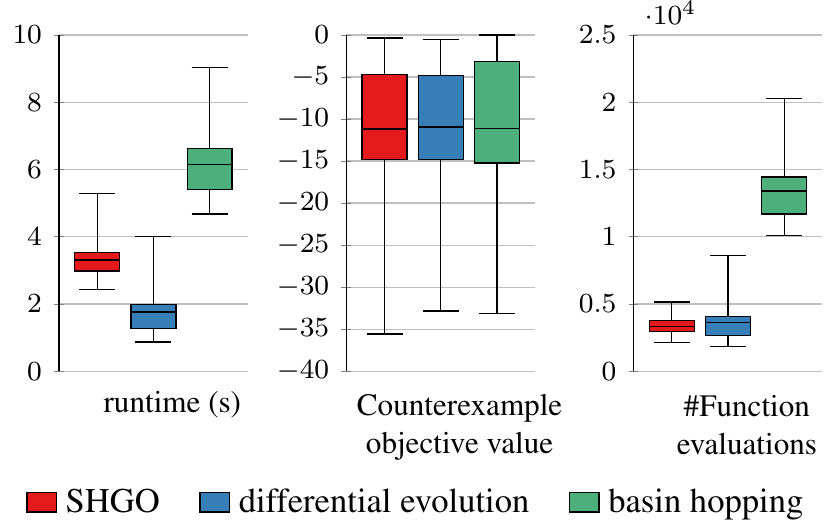}
    \caption{Comparison of optimization methods.}
    \label{fig:optimizers}
\end{figure}

\Cref{fig:optimizers} shows runtime, the objective value of the counter-examples found, and the number of function evaluations for state-of-the-art optimizers that can be used by \tool: basin hopping, SHGO, and differential evolution. 
The runtime of differential evolution is best.
The counter-example quality, measured as objective function value of the counter-example, where lower values are better than higher ones, is the lowest for SHGO.
A low objective value corresponds to a higher deviation from the safety specification and is important during the repair phase of \tool.
Also, SHGO needs few function evaluations, making it the most suitable to perform attacks on networks to which access is limited.
\newpage

\section{Non-Aggregated Data}

In this section, we give the non-aggregated data for the \textit{collision avoidance} and \textit{image classification} tasks from the experiments.
In the former, we repair 36 problem instances (consisting of a DNN and a safety property), and in the latter, we perform a robustness repair on a DNN for 100 distinct input images.
Also, we give detailed information of the runtime per algorithm phase: counter-example fixing, verification, counter-example generation, and overall.

\subsection{ACAS Xu}

This section gives additional information about the repair results for the collision avoidance task, counting the number of successful repairs, failures, unknown outcomes, and timeouts.

The runtimes per algorithm phase is given in in \Cref{tab:repair-acas-xu-overview-non-aggr}.

\begin{table}[H]
    \begin{center}
        \scriptsize
        \begin{tabularx}{\textwidth}{l @{\hskip 1em} cccc @{\hskip 1em} ccc c @{\hskip 1em} llll}
        \toprule
            & \multicolumn{4}{c}{\textbf{Rep. \!Outcome}} & \multicolumn{3}{c}{\textbf{Accuracy\! [\%]}} & \textbf{MAE} & \multicolumn{4}{c}{\textbf{Runtime \![s]}} \\
        \textbf{Tool} & \exsuccess & \exfailure & \,\,\exunknown\,\,\,\, & \extimeout\,\, & min & med & max & med &  Cx \!fixing & Verification & Cx \!generation & Overall \\
        \midrule
        \tool    & \textbf{28} &  0 &  2 &  6 & 14.5 & \textbf{99.5}   & 99.9 & \textbf{0.1}  & 13.0    & 264.7  & 6.5 & 299.6    \\ 
        DL2      &           3 & 33 &  0 &  0 & 89.5 & 87.6            & 96.0 & 1996.1        & 10658.5 & 94.9   & --  & 10690.6 \\ 
        MM       &           0 &  0 &  0 & 35 & 77.5 & 93.4            & 89.8& 6.03          & 8843.4  & 2528.2 & --  & 10834.1 \\ 
        NF       &          29 &  0 &  1 &  6 & \multicolumn{3}{c}{--} & --            & 0.1     & 9.9    & --  & \textbf{10.0}    \\
        \bottomrule
        \end{tabularx}
    \end{center}
    \caption{\textbf{Safety repair results of the ACAS Xu DNNs}: \tool (this work), DL2 \cite{DBLP:conf/icml/FischerBDGZV19}, nRepair (NR) \cite{DBLP:conf/qrs/DongSWWD21}, and minimal modification (MM) \cite{DBLP:conf/lpar/GoldbergerKAK20}.
    We compare the cumulative repair outcome for all 35 instances, median accuracy and mean average error (MAE) after repair, and the median runtime.
    \exsuccess{ }indicates successful repairs, \exfailure{ }indicates failed repairs, \extimeout{ }indicates a timeout, and `\exunknown' marks cases where the verifier returned \textsc{Unknown}.}
    \label{tab:repair-acas-xu-overview-non-aggr}
\end{table}

\newpage

\subsection{Diff-AI defended MNIST DNN}

\begin{table}[H]
    \begin{center}
    \scriptsize
        \begin{tabularx}{\textwidth}{l @{\hskip 1em} cccl ccl lllll}
        \toprule
        &      \multicolumn{4}{c}{\textbf{Rep. \!Outcome}}  & \multicolumn{3}{c}{\textbf{Accuracy \![\%]}} & \multicolumn{4}{c}{\textbf{Runtime \![s]}} \\
        \textbf{Tool} & \exsuccess & \exfailure & \,\,\exunknown\,\,\,\, & \extimeout\,\, \hspace{1em} & min           & med         & max \hspace{2em} &  Cx \!fixing & Verification & Cx \!generation & Overall  \\
        \midrule
        \tool & \textbf{100}    &   0 &   0 &   0 & 69.3 & 96.0          & 96.9 & 161.1   & 0.8  & 1.8 & 163.6    \\
        NR    & 84              &  16 &   0 &   0 & 59.9 & \textbf{97.3} & 97.9 & 6.9     & 26.9 & \,--  & \textbf{36.0}     \\
        DL2   & 10              &  82 &   0 &   8 & 77.3 & 91.9          & 96.6 & 18806.7 & 36.3 & \,--  & 18839.3 \\
        \bottomrule
        \end{tabularx}
    \end{center}
    \caption{\textbf{Robustness repair results of the DiffAI-defended \cite{DiffAi} MNIST DNN}: \tool (this work), nRepair (NR) \cite{DBLP:conf/qrs/DongSWWD21}, and DL2 \cite{DBLP:conf/icml/FischerBDGZV19}.
    We compare the cumulative repair outcome for all 100 instances, the test accuracy (minimum, median, and maximum) after repair, and the median runtime by algorithm phase.
    \exsuccess{ }indicates successful repairs, \exfailure{ }indicates failed repairs, \extimeout{ }indicates a timeout, and `\exunknown' marks cases where the verifier returned \textsc{Unknown}.}
        \label{tab:repair-mnist-runtime}
\end{table}

Verification (via ERAN) is a lot faster for \tool because it more frequently confirms that a given property is verified.
For DL2 and nRepair (NR), ERAN is (mis-)used to generate counter-examples, a task that ERAN is not designed for; it is designed to report satisfaction fast through the use of abstract interpretation. 

The results for DL2, as shown in \Cref{tab:repair-mnist-runtime}, are the best results obtained by testing five
different DL2 weight parameters: \(0.01, 0.05, 0.1, 0.2\) and \(0.5\).
Timeouts are ranked above failures, i.e.\ if repair with any of the weight values times out, that case is counted as a timeout in the above table. 

The runtime of one run of the DL2 tool is calculated as the sum of the runtime for the different weight values. 
Because of this the median runtime of the DL2 tool exceeds the timeout (10800) by more than two hours. 

\begin{table}[H]
\begin{center}
    \scriptsize
    \begin{tabularx}{\textwidth}{@{}l cc @{\hskip 1em} cccc XXX@{}}
    \toprule
           &   &  &  \multicolumn{4}{c}{\textbf{Repair Outcome}} & \multicolumn{3}{c}{\textbf{Accuracy \![\%]}} \\
    \textbf{Tool} & \textbf{Points} & \textbf{Instances}  & \,\,\textsc{Success}\,\, & \,\,\textsc{Fail}\,\, & \,\,\textsc{Unknown}\,\, & \,\,\textsc{Timeout}  & min  & med  & max \\
    \midrule
            & 1 & 100 & \textbf{100}  &   0 &   0 &   0 & 69.3 & 96.0 & 96.9  \\
    \tool   & 10 & 10 & \textbf{10}   &   0 &   0 &   0 & 89.9 & \textbf{93.1} & 94.2 \\
            & 25 & 4  & \textbf{3}    &   0 &   1 &   1 & 93.1 & \textbf{93.5} & 93.6 \\
    \midrule
            & 1  & 100 &  84 &  16 &   0 &   0 & 59.9 & \textbf{97.3} & 97.9 \\
    NR      & 10 & 10  &   0 &  10 &   0 &   0 & \multicolumn{3}{c}{ -- } \\
            & 25 & 4   &   0 &  4  &   0 &   0 & \multicolumn{3}{c}{ -- } \\
    \bottomrule
    \end{tabularx}
    \end{center}
    \caption{\textbf{Robustness repair results of the DiffAI-defended \cite{DiffAi} MNIST DNN}: \tool (this work), and nRepair (NR) \cite{DBLP:conf/qrs/DongSWWD21}.
    We compare the cumulative repair outcome for all 100 instances, with two partitions into groups of 10 and 25 instances, and the test accuracy (minimum, median, and maximum) after repair.}
    \label{tab:repair-mnist-instances-non-aggr}
\end{table}

\begin{table}
\begin{center}
\scriptsize
\begin{longtable}{@{}c ccccc | c ccccc@{}}
\toprule
      & \multicolumn{5}{c|}{\textbf{DL2 weight}} &  & \multicolumn{5}{c}{\textbf{DL2 weight}} \\
\textbf{Point} & 0.01 & 0.05 & 0.1 & 0.2 & 0.5 & \textbf{Point} & 0.01 & 0.05 & 0.1 & 0.2 & 0.5 \\
\midrule
0  &        \exfailure &        \exfailure &        \exfailure &        \exfailure &        \exfailure& 50 &        \exfailure &        \exfailure &        \exfailure &        \exfailure &        \exfailure\\
1  &        \exfailure &        \exfailure &        \exfailure &        \exfailure &        \exfailure& 51 &        \exfailure &        \exfailure &        \exfailure &        \exfailure &        \exfailure\\
2  &        \exfailure &        \exfailure &        \exfailure &        \exfailure &        \exfailure& 52 &        \exfailure &        \exfailure &        \exfailure &        \exfailure &        \exfailure\\
3  &        \exfailure &        \exfailure &        \exfailure &        \exfailure &        \exfailure& 53 &        \exfailure &        \exfailure &        \exfailure &        \exfailure &        \exfailure\\
4  &        \exfailure &        \exfailure &        \exfailure &        \exfailure &        \exfailure& 54 &        \exfailure &        \exfailure &        \exfailure &        \exfailure &        \exfailure\\
5  &        \exfailure &        \exfailure &        \exfailure &        \exfailure &        \exfailure& 55 &        \exfailure &        \exfailure &        \exfailure & \textbf{ 77.32\%} &        \exfailure\\
6  &        \exfailure &        \exfailure &        \exfailure &        \exfailure &        \exfailure& 56 &        \exfailure &        \exfailure &        \exfailure &        \exfailure &        \exfailure\\
7  &        \exfailure &        \exfailure &        \exfailure &        \exfailure &        \exfailure& 57 &        \exfailure &        \exfailure &        \exfailure &        \exfailure &        \exfailure\\
8  &        \exfailure &        \exfailure &        \exfailure &        \exfailure &        \exfailure& 58 &        \exfailure &        \exfailure &        \exfailure &        \exfailure &        \exfailure\\
9  &        \exfailure &        \exfailure &        \exfailure &        \exfailure &        \exfailure& 59 &        \exfailure &        \exfailure &        \exfailure &        \exfailure &        \exfailure\\
10 &        \exfailure &        \exfailure &        \exfailure &        \exfailure &        \exfailure& 60 &        \exfailure &        \exfailure &        \exfailure &        \exfailure &        \exfailure\\
11 &        \exfailure &        \exfailure &        \exfailure &        \exfailure &        \exfailure& 61 &        \exfailure &        \exfailure &        \exfailure & \textbf{ 91.58\%} &        \exfailure\\
12 &        \exfailure &        \exfailure &        \exfailure &        \exfailure &        \exfailure& 62 &        \exfailure &        \exfailure &        \exfailure &        \extimeout & \textbf{ 86.73\%}\\
13 &        \exfailure &        \exfailure &        \exfailure &        \exfailure &        \exfailure& 63 &        \exfailure &        \exfailure &        \exfailure &        \extimeout &        \exfailure\\
14 &        \exfailure &        \exfailure &        \exfailure &        \exfailure &        \exfailure& 64 &        \exfailure &        \exfailure &        \exfailure &        \exfailure &        \exfailure\\
15 &        \exfailure &        \exfailure &        \exfailure &        \exfailure &        \exfailure& 65 &        \exfailure &        \exfailure &        \exfailure &        \exfailure &        \exfailure\\
16 &        \exfailure &        \exfailure &        \exfailure &        \exfailure &        \exfailure& 66 &        \exfailure &        \exfailure &        \exfailure &        \exfailure &        \exfailure\\
17 &        \exfailure &        \exfailure &        \exfailure &        \exfailure &        \exfailure& 67 &        \exfailure &        \exfailure &        \exfailure &        \exfailure &        \exfailure\\
18 &        \exfailure &        \exfailure &        \exfailure &        \exfailure & \textbf{ 90.44\%}& 68 &        \exfailure &        \exfailure &        \exfailure &        \exfailure &        \exfailure\\
19 &        \exfailure &        \exfailure &        \exfailure &        \exfailure &        \exfailure& 69 &        \exfailure &        \exfailure &        \exfailure &        \exfailure &        \exfailure\\
20 &        \exfailure &        \exfailure &        \exfailure &        \exfailure & \textbf{ 87.04\%}& 70 &        \exfailure &        \exfailure &        \exfailure &        \exfailure &        \exfailure\\
21 &        \exfailure &        \exfailure &        \exfailure &        \exfailure &        \exfailure& 71 &        \exfailure &        \exfailure &        \exfailure &        \exfailure &        \exfailure\\
22 &        \exfailure &        \exfailure &        \exfailure &        \exfailure &        \exfailure& 72 &        \exfailure &        \exfailure &        \exfailure &        \exfailure &        \exfailure\\
23 &           91.10\% &        \exfailure &        \exfailure & \textbf{ 93.54\%} &        \exfailure& 73 &        \exfailure &        \exfailure &        \exfailure &        \exfailure &        \exfailure\\
24 &        \exfailure &        \exfailure &        \exfailure &        \exfailure &        \exfailure& 74 &        \exfailure & \textbf{ 96.58\%} &        \exfailure &        \exfailure &        \exfailure\\
25 &        \exfailure &        \exfailure &        \exfailure &        \exfailure &        \exfailure& 75 &        \exfailure &        \exfailure &        \exfailure &        \exfailure &        \exfailure\\
26 &        \exfailure &        \exfailure &        \exfailure &        \exfailure &        \exfailure& 76 &        \exfailure &        \exfailure &        \exfailure &        \exfailure &        \exfailure\\
27 &        \exfailure &        \exfailure &        \exfailure &        \exfailure &        \exfailure& 77 &        \exfailure &        \exfailure &        \exfailure & \textbf{ 92.16\%} &        \exfailure\\
28 &        \exfailure &        \exfailure &        \exfailure &        \exfailure &        \exfailure& 78 &        \exfailure &        \exfailure &        \exfailure & \textbf{ 96.51\%} &        \exfailure\\
29 &        \exfailure &        \exfailure &        \exfailure &        \exfailure &        \exfailure& 79 &        \exfailure &        \exfailure &        \exfailure &        \extimeout &        \exfailure\\
30 & \textbf{ 96.28\%} &        \exfailure &        \exfailure &           95.95\% &        \exfailure& 80 &        \exfailure &        \exfailure &        \exfailure &        \extimeout &        \exfailure\\
31 &        \exfailure &        \exfailure &        \exfailure &        \exfailure &        \exfailure& 81 &        \exfailure &        \exfailure &        \exfailure &        \extimeout &        \exfailure\\
32 &        \exfailure &        \exfailure &        \exfailure &        \exfailure &        \exfailure& 82 &        \exfailure &        \exfailure &        \exfailure &        \extimeout &        \exfailure\\
33 &        \exfailure &        \exfailure &        \exfailure &        \exfailure &        \exfailure& 83 &        \exfailure &        \exfailure &        \exfailure &        \extimeout &        \exfailure\\
34 &        \exfailure &        \exfailure &        \exfailure &        \exfailure &        \exfailure& 84 &        \exfailure &        \exfailure &        \exfailure &        \exfailure &        \exfailure\\
35 &        \exfailure &        \exfailure &        \exfailure &        \exfailure &        \exfailure& 85 &        \exfailure &        \exfailure &        \exfailure &        \exfailure &        \exfailure\\
36 &        \exfailure &        \exfailure &        \exfailure &        \exfailure &        \exfailure& 86 &        \exfailure &        \exfailure &        \exfailure &        \exfailure &        \exfailure\\
37 &        \exfailure &        \exfailure &        \exfailure &        \exfailure &        \exfailure& 87 &        \exfailure &        \exfailure &        \exfailure &        \exfailure &        \exfailure\\
38 &        \exfailure &        \exfailure &        \exfailure &        \exfailure &        \exfailure& 88 &        \exfailure &        \exfailure &        \exfailure &        \exfailure &        \exfailure\\
39 &        \exfailure &        \exfailure &        \exfailure &        \exfailure &        \exfailure& 89 &        \exfailure &        \exfailure &        \exfailure &        \exfailure &        \exfailure\\
40 &        \exfailure &        \exfailure &        \exfailure &        \exfailure &        \exfailure& 90 &        \exfailure &        \exfailure &        \exfailure &        \exfailure &        \exfailure\\
41 &        \exfailure &        \exfailure &        \exfailure &        \exfailure &        \exfailure& 91 &        \exfailure &        \exfailure &        \exfailure &        \extimeout &        \exfailure\\
42 &        \exfailure &        \exfailure &        \exfailure &        \exfailure &        \exfailure& 92 &        \exfailure &        \exfailure &        \exfailure &        \exfailure &        \exfailure\\
43 &        \exfailure &        \exfailure &        \exfailure &        \exfailure &        \exfailure& 93 &        \exfailure &        \exfailure &        \exfailure &        \exfailure &        \exfailure\\
44 &        \exfailure &        \exfailure &        \exfailure &        \exfailure &        \exfailure& 94 &        \exfailure &        \exfailure &        \exfailure &        \exfailure &        \exfailure\\
45 &        \exfailure &        \exfailure &        \exfailure &        \exfailure &        \exfailure& 95 &        \exfailure &        \exfailure &        \exfailure &        \exfailure &        \exfailure\\
46 &        \exfailure &        \exfailure &        \exfailure &        \exfailure &        \exfailure& 96 &        \exfailure &        \exfailure &        \exfailure &        \exfailure &        \exfailure\\
47 &        \exfailure &        \exfailure &        \exfailure &        \exfailure &        \exfailure& 97 &        \exfailure &        \exfailure &        \exfailure &        \exfailure &        \exfailure\\
48 &        \exfailure &        \exfailure &        \exfailure &        \exfailure &        \exfailure& 98 &        \exfailure &        \exfailure &        \exfailure &        \exfailure &        \exfailure\\
49 &        \exfailure &        \exfailure &        \exfailure &        \exfailure &        \exfailure& 99 &        \exfailure &        \exfailure &        \exfailure &        \exfailure &        \exfailure\\
\bottomrule
\end{longtable}
\end{center}
\caption{\textbf{Non-aggregated robustness repair results of the DiffAI-defended \cite{DiffAi} MNIST DNN}: DL2 \cite{DBLP:conf/icml/FischerBDGZV19}.
    Repair outcome for all 100 instances and the 5 DL2 weights, and median test accuracy.}
    \label{tab:repair-mnist-non-aggregated}
\end{table}

\newpage

\subsection{CIFAR10 CNN}

As for the MNIST experiment, the results for DL2 are the best results obtained by testing five
different DL2 weight parameters: \(0.05, 0.01, 0.1, 0.2\) and \(0.5\).
Refer to \Crefrange{tab:repair-cifar10-non-aggregated-a}{tab:repair-cifar10-non-aggregated-b}.

\begin{table}[H]
    \begin{center}
    \scriptsize
    \begin{tabularx}{\textwidth}{X cccc @{\qquad} XXX c}
    \toprule
        & \multicolumn{4}{c}{\textbf{Repair Outcome}}&  \multicolumn{3}{c}{\textbf{Accuracy [\%]}} & \textbf{Runtime [s]} \\
    \textbf{Tool} & \,\,\textsc{Success}\,\, & \,\,\textsc{Fail}\,\, & \,\,\textsc{Unknown}\,\, & \,\,\textsc{Timeout}  & min           & med         & max  & med \\
    \midrule
    \tool & 88 &   5 &   1 &   6 & 62.3 & \textbf{69.6} & 70.7 & \textbf{4269.7} \\
    DL2        & \textbf{100}          &   0 &   0 &  0 & 58.7 & 61.8          & 70.5 & 26724.1\, \\
    \bottomrule
    \end{tabularx}
    \end{center}
    \caption{\textbf{Robustness repair results of the CIFAR10 CNN}: \tool (this work), minimal modification (MM) \cite{DBLP:conf/lpar/GoldbergerKAK20}, and DL2 \cite{DBLP:conf/icml/FischerBDGZV19}.
    We compare the cumulative repair outcome for all 100 instances, the test accuracy (minimum, median, and maximum) after repair, and the median runtime.}
    \label{tab:repair-cifar10-non-aggr}
\end{table}

\begin{table}
    \begin{center}
    \scriptsize
    \begin{longtable}{@{}c ccccc@{}}
\toprule
      & \multicolumn{5}{c}{\textbf{DL2 weight}} \\
\textbf{Point} & 0.01 & 0.05 & 0.1 & 0.2 & 0.5 \\
\midrule
\endhead
0  & \(\textbf{64.69\%/82.70\%}\) &   62.36\%/75.78\% &   60.55\%/72.75\% &        \extimeout &   60.35\%/72.39\% \\
1  & \(\textbf{61.40\%/75.08\%}\) &   60.11\%/72.74\% &   60.30\%/72.19\% &        \extimeout &   59.48\%/71.17\% \\
2  &        \extimeout & \textbf{63.11\%/77.91\%} &   61.69\%/74.97\% &        \extimeout &   59.27\%/71.05\% \\
3  &        \extimeout & \textbf{61.84\%/75.63\%} &   59.75\%/71.68\% &        \extimeout &   58.79\%/70.16\% \\
4  & \(\textbf{58.65\%/70.04\%}\) &   57.55\%/67.97\% &   58.33\%/69.28\% &        \extimeout &   58.22\%/68.74\% \\
5  &        \extimeout & \textbf{63.48\%/78.53\%} &   62.59\%/77.12\% &        \extimeout &   61.79\%/74.94\% \\
6  &        \extimeout & \textbf{62.82\%/76.65\%} &   60.58\%/72.79\% &        \extimeout &   59.99\%/71.98\% \\
7  &        \extimeout & \(\textbf{61.19\%/73.58\%}\) &   59.66\%/71.55\% &        \extimeout &   59.17\%/70.59\% \\
8  &        \extimeout & \(\textbf{70.51\%/94.99\%}\) &   68.36\%/89.87\% &        \extimeout &   66.57\%/86.32\% \\
9  &        \extimeout & \(\textbf{63.93\%/79.41\%}\) &   62.00\%/75.42\% &        \extimeout &   61.74\%/74.83\% \\
10 &        \extimeout &        \extimeout &   59.43\%/71.20\% &        \extimeout & \textbf{59.65\%/71.58\%} \\
11 &        \extimeout & \(  59.55\%/70.96\%\) & \textbf{59.90\%/71.04\%} &        \extimeout &   59.23\%/70.29\% \\
12 & \(\textbf{61.11\%/73.93\%}\) & \(  60.52\%/72.53\%\) &   58.77\%/70.29\% &        \extimeout &   59.37\%/70.24\% \\
13 &        \extimeout & \(\textbf{61.16\%/73.29\%}\) &   58.97\%/70.50\% &        \extimeout &   58.48\%/69.92\% \\
14 & \(\textbf{61.77\%/74.57\%}\) &        \extimeout &   59.90\%/71.64\% &   59.56\%/71.06\% &   59.06\%/70.13\% \\
15 &        \extimeout & \(\textbf{60.21\%/71.72\%}\) &   59.37\%/70.40\% &   59.74\%/70.48\% &   59.32\%/70.32\% \\
16 & \(\textbf{63.42\%/78.45\%}\) & \(  63.17\%/77.44\%\) &   61.09\%/73.95\% &   60.65\%/72.89\% &   60.17\%/71.84\% \\
17 &        \extimeout &        \extimeout & \textbf{61.10\%/73.76\%} &   59.51\%/71.34\% &   59.72\%/71.49\% \\
18 &        \extimeout & \(\textbf{60.63\%/73.01\%}\) &   58.85\%/70.38\% &   59.15\%/70.30\% &   59.03\%/70.19\% \\
19 &        \extimeout &        \extimeout &   59.37\%/71.25\% &   59.77\%/71.73\% & \textbf{59.79\%/71.50\%} \\
20 & \(\textbf{64.23\%/80.82\%}\) & \(  63.39\%/77.95\%\) &   61.07\%/73.70\% &   60.98\%/73.67\% &   61.13\%/72.98\% \\
21 & \(\textbf{64.81\%/81.27\%}\) & \(  62.64\%/76.13\%\) &   60.94\%/73.60\% &   60.26\%/72.21\% &   59.94\%/71.47\% \\
22 & \(\textbf{61.18\%/73.50\%}\) &        \extimeout &   59.55\%/70.69\% &   58.98\%/69.91\% &   59.27\%/70.53\% \\
23 & \(\textbf{66.57\%/86.14\%}\) & \(  65.26\%/82.39\%\) &   63.37\%/78.26\% &   63.25\%/77.02\% &   60.67\%/72.48\% \\
24 & \(\textbf{63.03\%/78.14\%}\) & \(  62.04\%/75.37\%\) &   59.23\%/70.73\% &   59.11\%/70.35\% &   59.15\%/70.20\% \\
25 &        \extimeout & \(\textbf{63.66\%/78.67\%}\) &   61.78\%/74.78\% &        \extimeout &   62.60\%/76.19\% \\
26 & \(  60.91\%/74.48\%\) & \(\textbf{60.93\%/73.32\%}\) &   58.30\%/69.66\% &        \extimeout &   57.55\%/68.26\% \\
27 &        \extimeout &        \extimeout &   61.72\%/74.29\% &        \extimeout & \textbf{62.16\%/75.23\%} \\
28 &        \extimeout & \(\textbf{61.49\%/74.43\%}\) &   59.92\%/71.58\% &        \extimeout &   59.93\%/70.93\% \\
29 & \(\textbf{64.57\%/80.19\%}\) &        \extimeout &   60.97\%/73.36\% &        \extimeout &   59.64\%/71.04\% \\
30 & \(\textbf{60.68\%/73.05\%}\) &        \extimeout &   58.45\%/69.65\% &        \extimeout &   58.52\%/69.52\% \\
31 &        \extimeout & \(  61.57\%/74.67\%\) & \textbf{63.30\%/78.02\%} &        \extimeout &   61.75\%/75.17\% \\
32 & \(\textbf{61.23\%/74.45\%}\) & \(  60.88\%/72.76\%\) &   59.12\%/71.08\% &        \extimeout &   59.23\%/70.89\% \\
33 & \(\textbf{64.51\%/81.20\%}\) &        \extimeout &   60.72\%/73.61\% &        \extimeout &   60.10\%/72.25\% \\
34 & \(\textbf{61.64\%/74.57\%}\) &        \extimeout &   59.45\%/70.36\% &        \extimeout &   59.38\%/70.20\% \\
35 & \(\textbf{65.00\%/82.70\%}\) &        \extimeout &   61.47\%/75.12\% &        \extimeout &   61.25\%/74.47\% \\
36 & \(\textbf{61.67\%/75.04\%}\) &        \extimeout &   59.25\%/70.85\% &        \extimeout &   58.99\%/70.36\% \\
37 & \(\textbf{63.99\%/80.04\%}\) & \(  61.47\%/74.74\%\) &   59.81\%/71.12\% &        \extimeout &   59.50\%/70.77\% \\
38 & \(\textbf{60.06\%/71.35\%}\) &   58.90\%/70.05\% &   58.94\%/70.00\% &   59.14\%/70.31\% &   58.55\%/69.32\% \\
39 &        \extimeout & \textbf{69.27\%/91.76\%} &   63.44\%/78.81\% &   62.41\%/77.02\% &   60.57\%/73.64\% \\
40 & \(\textbf{61.54\%/74.86\%}\) &   60.70\%/73.22\% &   60.36\%/72.80\% &   60.38\%/72.11\% &   59.98\%/71.59\% \\
41 & \(\textbf{63.76\%/79.21\%}\) &   63.09\%/77.81\% &   62.19\%/75.98\% &   60.94\%/72.88\% &   60.70\%/72.47\% \\
42 & \(\textbf{64.28\%/80.08\%}\) &   62.49\%/75.75\% &   62.31\%/75.39\% &   62.24\%/74.97\% &   62.10\%/74.72\% \\
43 &        \extimeout & \textbf{60.29\%/72.36\%} &   59.99\%/71.96\% &   59.70\%/71.52\% &   59.78\%/71.41\% \\
44 & \(\textbf{61.62\%/74.53\%}\) &   59.98\%/72.03\% &   59.57\%/71.47\% &   59.54\%/71.13\% &   59.22\%/70.63\% \\
45 & \(  63.65\%/79.37\%\) &   64.15\%/80.65\% &   68.34\%/89.16\% & \textbf{69.24\%/91.76\%} &   67.60\%/86.57\% \\
46 & \(\textbf{61.52\%/74.52\%}\) &   59.40\%/71.18\% &   59.16\%/70.85\% &   59.09\%/70.67\% &   59.10\%/70.74\% \\
47 &        \extimeout & \textbf{62.52\%/76.27\%} &   62.19\%/76.05\% &   61.94\%/75.38\% &   61.10\%/73.72\% \\
48 & \(\textbf{61.78\%/74.71\%}\) &   60.49\%/71.54\% &   60.49\%/71.67\% &   60.35\%/71.23\% &   60.28\%/71.32\% \\
49 & \(\textbf{62.72\%/77.17\%}\) &   60.59\%/72.73\% &   59.78\%/71.48\% &   59.89\%/71.49\% &   59.20\%/70.83\% \\
\bottomrule
\end{longtable}

    \end{center}
    \caption{\textbf{Non-aggregated robustness repair results of the CIFAR10 CNN}: DL2 \cite{DBLP:conf/icml/FischerBDGZV19}.
    Repair outcome for the instances 0 to 49 and the 5 DL2 weights, and median test accuracy.}
    \label{tab:repair-cifar10-non-aggregated-a}
\end{table}

\begin{table}
    \begin{center}
    \scriptsize
    \begin{longtable}{@{}c ccccc@{}}
\toprule
      & \multicolumn{5}{c}{\textbf{DL2 weight}} \\
\textbf{Point} & 0.01 & 0.05 & 0.1 & 0.2 & 0.5 \\
\midrule
\endhead
50 &        \extimeout & \textbf{63.79\%/79.83\%} &   61.59\%/74.98\% &        \extimeout &   61.47\%/74.59\% \\
51 & \(\textbf{65.39\%/84.18\%}\) &   63.12\%/78.54\% &   62.35\%/76.69\% &        \extimeout &   59.23\%/70.78\% \\
52 & \(\textbf{63.71\%/78.90\%}\) &   62.58\%/75.52\% &   62.48\%/75.31\% &        \extimeout &   62.00\%/74.73\% \\
53 & \(\textbf{59.89\%/71.18\%}\) &   58.81\%/69.71\% &   58.57\%/69.53\% &        \extimeout &   58.43\%/69.49\% \\
54 &        \extimeout & \textbf{60.38\%/73.21\%} &   60.33\%/72.47\% &        \extimeout &   59.64\%/71.25\% \\
55 &        \extimeout &   59.58\%/71.65\% & \textbf{59.90\%/71.74\%} &        \extimeout &   59.27\%/70.88\% \\
56 & \(\textbf{62.26\%/76.07\%}\) &   60.55\%/72.94\% &   59.93\%/71.53\% &        \extimeout &   59.18\%/70.14\% \\
57 & \(\textbf{60.70\%/72.99\%}\) &   59.41\%/70.49\% &   59.09\%/70.16\% &        \extimeout &   59.01\%/70.06\% \\
58 & \(\textbf{64.54\%/80.08\%}\) &   62.25\%/75.91\% &   61.67\%/74.63\% &        \extimeout &   59.12\%/70.04\% \\
59 & \(\textbf{66.00\%/84.16\%}\) &   64.14\%/79.80\% &   63.07\%/77.29\% &        \extimeout &   61.00\%/73.39\% \\
60 & \(\textbf{61.91\%/74.97\%}\) &   60.19\%/71.53\% &   60.05\%/71.54\% &   60.08\%/71.47\% &   60.07\%/71.48\% \\
61 &        \extimeout &   60.64\%/72.81\% & \textbf{61.01\%/72.99\%} &   59.91\%/71.82\% &   60.00\%/71.51\% \\
62 & \(\textbf{61.92\%/76.44\%}\) &   61.61\%/74.82\% &   61.35\%/74.55\% &   61.12\%/73.83\% &   60.17\%/71.77\% \\
63 & \(\textbf{60.68\%/73.61\%}\) &   60.25\%/72.10\% &   59.77\%/71.45\% &   59.63\%/71.13\% &   59.41\%/71.02\% \\
64 &        \extimeout & \textbf{59.61\%/70.67\%} &   59.01\%/70.15\% &   59.32\%/70.66\% &   59.34\%/70.68\% \\
65 & \(\textbf{60.52\%/72.58\%}\) &   58.65\%/69.98\% &   59.09\%/70.15\% &   58.25\%/68.82\% &   58.31\%/69.10\% \\
66 & \(\textbf{64.37\%/80.68\%}\) &   60.47\%/73.23\% &   60.84\%/73.17\% &   60.90\%/72.96\% &   60.44\%/72.15\% \\
67 &        \extimeout & \textbf{63.63\%/78.36\%} &   62.91\%/76.85\% &   61.98\%/75.49\% &   60.62\%/72.70\% \\
68 &        \extimeout &   59.26\%/70.57\% &   59.17\%/70.53\% &   59.24\%/70.53\% & \textbf{59.30\%/70.55\%} \\
69 & \(\textbf{61.42\%/74.71\%}\) &   60.25\%/72.16\% &   59.72\%/71.18\% &   59.33\%/70.93\% &   59.12\%/70.26\% \\
70 & \(\textbf{66.34\%/85.89\%}\) &   63.98\%/80.59\% &   63.54\%/78.94\% &   62.39\%/76.93\% &   60.22\%/72.61\% \\
71 & \(\textbf{66.84\%/86.42\%}\) &   66.09\%/85.21\% &   65.46\%/83.99\% &   64.45\%/82.04\% &   63.55\%/79.60\% \\
72 & \(  67.01\%/86.71\%\) &   66.67\%/86.02\% & \textbf{67.02\%/86.24\%} &   65.94\%/85.16\% &   66.83\%/86.01\% \\
73 & \(\textbf{59.66\%/71.01\%}\) &   59.65\%/71.33\% &   59.24\%/70.76\% &   58.86\%/70.26\% &   59.08\%/70.34\% \\
74 &        \extimeout & \textbf{63.85\%/79.45\%} &   63.76\%/79.97\% &   63.05\%/77.28\% &   62.57\%/75.93\% \\
75 & \(\textbf{61.02\%/74.60\%}\) &   60.04\%/71.88\% &   59.71\%/71.46\% &        \extimeout &   59.93\%/71.46\% \\
76 & \(\textbf{63.73\%/78.18\%}\) &   61.39\%/74.65\% &   61.13\%/74.04\% &        \extimeout &   59.13\%/69.94\% \\
77 &        \extimeout & \textbf{59.83\%/71.39\%} &   59.67\%/70.76\% &        \extimeout &   59.53\%/70.40\% \\
78 & \(\textbf{63.31\%/78.70\%}\) &   61.92\%/75.31\% &   61.15\%/73.74\% &        \extimeout &   59.30\%/70.47\% \\
79 & \(\textbf{59.55\%/71.52\%}\) &   58.49\%/70.20\% &   58.36\%/69.67\% &        \extimeout &   58.66\%/69.75\% \\
80 & \(\textbf{62.44\%/76.35\%}\) &   60.29\%/71.64\% &   60.39\%/71.91\% &        \extimeout &   60.30\%/71.74\% \\
81 & \(\textbf{61.52\%/74.36\%}\) &   59.16\%/70.75\% &   59.00\%/70.44\% &   58.82\%/70.04\% &   58.43\%/69.83\% \\
82 & \(\textbf{61.27\%/74.04\%}\) &   59.65\%/71.26\% &   59.67\%/71.21\% &   59.60\%/71.22\% &   59.55\%/71.19\% \\
83 &        \extimeout & \textbf{61.81\%/75.05\%} &   61.76\%/74.85\% &   61.51\%/74.16\% &   60.66\%/72.83\% \\
84 &        \extimeout & \textbf{63.59\%/79.35\%} &   61.78\%/75.78\% &   61.41\%/75.38\% &   61.09\%/73.79\% \\
85 &        \extimeout & \textbf{59.67\%/71.34\%} &   59.46\%/71.04\% &   58.81\%/70.16\% &   58.71\%/69.98\% \\
86 &        \extimeout & \textbf{62.21\%/76.76\%} &   60.78\%/73.47\% &   60.71\%/74.70\% &   59.65\%/71.59\% \\
87 & \(\textbf{61.11\%/73.62\%}\) &   60.17\%/71.73\% &   59.58\%/70.98\% &   59.70\%/70.90\% &   59.40\%/70.52\% \\
88 &        \extimeout &   59.99\%/72.10\% & \textbf{60.42\%/72.34\%} &   60.34\%/72.15\% &   60.08\%/71.94\% \\
89 & \(\textbf{62.48\%/76.14\%}\) &   60.37\%/72.41\% &   60.43\%/72.36\% &   60.25\%/72.25\% &   60.23\%/72.01\% \\
90 & \(\textbf{61.46\%/74.25\%}\) &   60.66\%/73.09\% &   60.23\%/72.24\% &   59.79\%/71.60\% &   59.01\%/70.33\% \\
91 & \(\textbf{65.12\%/82.80\%}\) &   62.85\%/77.83\% &   61.89\%/75.58\% &   61.51\%/74.42\% &   61.39\%/74.06\% \\
92 &        \extimeout &   59.34\%/71.03\% &   59.44\%/70.89\% &   59.41\%/70.89\% & \textbf{59.44\%/70.85\%} \\
93 & \(\textbf{61.84\%/74.95\%}\) &   60.47\%/72.73\% &   59.98\%/71.92\% &   59.65\%/71.40\% &   58.88\%/70.28\% \\
94 & \(\textbf{62.97\%/77.87\%}\) &   61.66\%/74.64\% &   61.15\%/73.98\% &   61.08\%/73.64\% &   61.08\%/72.88\% \\
95 & \(\textbf{61.45\%/74.66\%}\) &   61.22\%/73.52\% &   60.47\%/72.23\% &   60.31\%/71.02\% &   60.08\%/71.60\% \\
96 & \(\textbf{61.28\%/74.01\%}\) &   60.53\%/72.46\% &   59.28\%/70.92\% &   59.15\%/70.52\% &   59.05\%/70.27\% \\
97 &        \extimeout &   60.80\%/72.65\% &   60.84\%/72.77\% &   60.88\%/72.82\% & \textbf{61.00\%/72.89\%} \\
98 &        \extimeout & \textbf{60.34\%/72.61\%} &   60.27\%/71.85\% &   59.60\%/70.75\% &   60.14\%/72.01\% \\
99 & \(\textbf{63.58\%/79.34\%}\) &   60.85\%/73.33\% &   60.63\%/72.59\% &   60.23\%/72.35\% &   59.98\%/71.27\% \\
\bottomrule
\end{longtable}

    \end{center}
    \caption{\textbf{Non-aggregated robustness repair results of the CIFAR10 CNN}: DL2 \cite{DBLP:conf/icml/FischerBDGZV19}.
    Repair outcome for the instances 50 to 99 and the 5 DL2 weights, and median test accuracy.}
    \label{tab:repair-cifar10-non-aggregated-b}
\end{table}
}{}

\end{document}


\maketitle

\section{General Changes}

Since the submission to the NeurIPS conference, we further developed our method and included these developments into the final version of our paper.
In essence, while the original only discussed the \emph{attack} (falsification) capabilities of our method, we have now also included \emph{repair}.
This is also reflected in new experiments.

\section{Changes concerning original reviews}

We believe that we addressed all statements of the individual reviews. 
Detailed discussions are given hereafter.

\subsection{Review iEae}

\textit{This paper proposes a technique for falsification based verification of DNNs. [...] Unfortunately, the algorithm provides only one-sided guarantee.}

This is a misunderstanding. We only propose a falsification approach. Verification is not our goal. The confusion may arise because we also compare to verification approaches; we do this because some of these approaches can also find counter-examples.

Unlike verification approaches, dedicated falsification approaches are typically faster in counter-example discovery, while verification approaches are typically optimized for the verification case. Having no verification capabilities is thus not a limitation in practice. If interested in a both-sided answer, it is usually best to use falsification approaches like ours concurrently with verification approaches. This is now included in our repair process, which first performs several steps of falsification and as a final step, verification.

\textit{The idea of finding counter-examples to formal properties of DNNs using objective function optimization is not new.}

Indeed, optimization-based approaches need to define an objective function to be optimized. In that sense our approach is similar to many other approaches. Our concrete instantiation of the objective function has (to the best of our knowledge) not been proposed before. In the evaluation we have compared to other state-of-the-art approaches such as DL2 and shown that our idea outperforms them.

\textit{Also, the literature contains critiques of the SHGO technique as well. The authors seem to project SHGO as the panacea for global function optimization -- this is a bit misleading in my opinion, and the authors should have addressed the limitations of SHGO (and hence limitations of their technique) as well.}

When we experimented with different optimizers, we found that SHGO worked best in our experiments.
But we agree that it is not the universally best optimizer.
We have now added an experiment that clarifies this point by comparing runtime, number of function evaluations, and quality of the counter-example used for the repair step.

\subsection{Review 9YL2}

\textit{There is no comparison against gradient-based methods for counter-example generation.}
Gradient-based methods such as FGSM have been shown to be outperformed by HopSkipJumpAttack (HSJA). We have compared against HSJA.

\textit{Several SoA verification tools are not included in the experimental comparison (e.g., [1,2,3]).}

We have discussed this in the related work section of the introduction.

\textit{DeepGO, which is also based on global optimisation, is not included in the MNIST and CIFAR10 experiments.}

HSJA and DL2 are also based on optimization, and are included in the MNIST and CIFAR10 experiments.

\textit{In particular, it is not clear why SHGO is used only for property falsification. Could it not in principle be used for certification?}

As discussed, SHGO as well as the other optimizers do not offer any guarantee to find the minimum, and thus cannot perform certification.

[1] Bak et. al, Improved Geometric Path Enumeration for Verifying ReLU Neural Networks, CAV'20.

[2] G. Katz, et al. The Marabou Framework for Verification and Analysis of Deep Neural Networks, CAV19.

[3] E. Botoeva, P. Kouvaros, J.~Kronqvist, A. Lomuscio, R. Misener, Efficient Verification of Neural Networks via Dependency Analysis, AAAI20.

\subsection{Review VTJE}

\textit{Have you tried other optimization algorithms than SHGO?}
In our experiments, we now compare basin-hopping and differential evolution, but due to better scalability and higher quality of counter-examples, we finally settled on SHGO.

\textit{A high level question: In this work we care about finding counter-examples, but is there any notion of a quality of the counter-examples found? Assuming that downstream task is to actually improve neural network performance by making use of counter-examples.}

For our repair procedure, we now discuss quality of the counter-example w.r.t. its objective value. The lower this value, the higher quality the counter-example, as this means a high deviation from the specification.
This is also evaluated.

\textit{(a) intuition on when they think their approach would perform sub-optimally}
The main limitation is in the optimizer. We found that SHGO has problems in case of objective-function surfaces where the gradient has an abrupt decrease in a very small area. We managed to mitigate this problem for some cases using our sampling strategy, but we cannot guarantee that this strategy covers all cases of this nature.

\textit{(b) what happens if there is no counter-example and how users can be sure that counter-example does not exist}
Without a counter-example, the repair method immediately invokes the verifier, which than either guarantees non-existence of such a counter-example or returns one. Interestingly, ERAN, the verifier we use, times out for some of the harder cases, ongoing experiments show the same for Marabou.

\textit{(c) is there a notion of quality of counter-examples if we wanted to retrain neural network to perform correctly in as few steps as possible.}

This is now discussed and included in our experiments.

\subsection{Review c3Eq}

\textit{As far as I understand, there must be a clear trade-off here: either you have to make a (n exponentially) large number of DNN executions (the proposed approach), or you suffer from some formal encoding and dealing with an NP-hard problem with the use of some form of automated reasoning (a number of SOTA approaches).}

Our new experiments concerning repair now show that executing the attack and only after all repair steps have been completed, executing the formal verifier, is a good trade-off.

\textit{Finally, the main issue pertaining to this approach is that, unless I am mistaken, it cannot provide any formal proof of the systems analyzed being correct (i.e. satisfying the target properties), as there is no formal reasoning about the system. What I mean is that if after a number of iterations there is no counter-example found, the approach is only able to report UNKNOWN.}

Indeed, our approach is a falsification approach. This criticism is valid for any other state-of-the art falsification approach. Returning the answer that "no counter-example exists" would be equivalent to verification. As the reviewer remarked, verification approaches solve an NP-hard problem and are thus inherently non-scalable. We believe that dedicated falsification approaches such as ours present a valid solution to these scalability problems and thus do not see this as a drawback but rather as a feature.

Unlike verification approaches, dedicated falsification approaches are typically faster in counter-example discovery, while verification approaches are typically optimized for the verification case.

This is demonstrated in our new experiments concerning repair of neural networks.